\pgfplotsset{compat=1.15}
\definecolor{brickred}{rgb}{0.8, 0.25, 0.33}
\definecolor{cb-burgundy}{RGB}{146,   0,   0}
\definecolor{cb-brown}      {RGB}{146,  73,   0}
\definecolor{cb-custom-green}{rgb}{0.01, 0.75, 0.24}
\definecolor{darkslategray}{rgb}{0.18, 0.31, 0.31}
\definecolor{antiquefuchsia}{rgb}{0.57, 0.36, 0.51}
\definecolor{cb-blue}       {RGB}{ 0, 109, 219}
\definecolor{cb-custom-orange}{RGB}{250, 148, 0}
\definecolor{cb-custom-purple}{RGB}{157, 100, 211}
\definecolor{cb-custom-dark-green}{RGB}{37, 139, 40}
\definecolor{ffffff}{rgb}{1,1,1}
\newcommand{\emptyfootnote}{\let\thefootnote\relax\footnotetext}
\tikzset{%
  >={Latex[width=2mm,length=2mm]},
            base/.style = {rectangle, rounded corners, draw=black,
            minimum width=4cm, minimum height=1cm,
            text centered, font=\sffamily},
    test/.style={
       rectangle,
       fill=white, 
       font=\sffamily , 
       draw=darkslategray, very thick,
       text width=2.3cm,
       minimum height=3.5em,
       text centered,
       },
    test_b/.style={
       rectangle,
       fill=white, 
       font=\sffamily , 
       draw=darkslategray, dotted,
       minimum height=2em,
       text centered,
       },
        test_b2/.style={
       rectangle,
       fill=white, 
       font=\sffamily , 
       draw=darkslategray, dotted,
       minimum height=2em,
       text centered,
       },
       test_c/.style={
       rectangle,
       fill=white, 
       font=\sffamily , 
       draw=darkslategray, dotted,
       text width=7.3cm,
       minimum height=25em,
       minimum width=45em,
       text centered,
       },
     source/.style={
       minimum width=4cm,
       rounded corners,
       font=\sffamily , 
       draw=cb-brown, very thick,
       minimum height=3em,
       text width=3.5cm,
       inner sep=2.5pt,
       text centered,
       },
       syn/.style={source, 
       fill=cb-custom-green,
       draw=darkslategray, very thick,
       minimum width=2.5em,
       text width=2.5cm,
       },
       aug/.style={source, 
        fill={antiquefuchsia},
       draw=byzantium, very thick,
       minimum width=2.5cm,
       text width=6.2cm,
       },               
       nag/.style={source, 
       fill={brickred},
       draw=cb-burgundy, very thick,
       minimum width=2.5em,
       text width=2.5cm,
       },   
       pretrained/.style={test, 
       draw=brickred, dotted, thick, 
       fill = brickred!10,
       text width=4cm, 
       minimum width = 12em, 
       minimum height = 10em},
       llm/.style={test, 
       draw=black, dotted, thick, 
       minimum width = 2em, 
       minimum height = 6em},
       transfer/.style={test, 
       draw=brickred, dotted, thick, 
       text width=4cm, 
       minimum width = 12em, 
       minimum height = 10em},
       llm/.style={test, 
       draw=black, dotted, thick, 
       minimum width = 2em, 
       minimum height = 6em},
}
\newcommand{\wfc}{\texttt{WFC}}
\newcommand{\da}{\texttt{DA}}
\newcommand{\demonic}{\textit{demonic}}
\def\s{\mathcal{S}}
\newcommand{\Pb}{\mathbb{P}}
\newcommand{\sqrtp}[1]{\sqrt[\leftroot{0} \uproot{5} p]{#1}}
\newcommand{\normp}[1]{\left\| #1\right\|_p}
\newcommand{\abs}[1]{\left|#1\right|}
\newcommand{\customparagraph}[1]{\textit{#1}}
\begin{document}

\hypersetup{hidelinks}


\title{Fair Text Classification via Transferable Representations}

\author{\name Thibaud Leteno \email thibaud.leteno@univ-st-etienne.fr \\
       \addr Université Jean Monnet Saint-Etienne, \\
       CNRS, Institut d'Optique Graduate School, \\ 
       Laboratoire Hubert Curien UMR 5516, F-42023, \\ Saint-Etienne, France
       \AND
       \name Michael Perrot \email michael.perrot@inria.fr \\
       \addr Univ. Lille, Inria, CNRS, Centrale Lille, \\ UMR 9189 - CRIStAL, F-59000, 
       Lille, France
       \AND
       \name Charlotte Laclau \email charlotte.laclau@telecom-paris.fr \\
       \addr LTCI, Télécom Paris
       \\ Institut Polytechnique de Paris, France
        \AND
       \name Antoine Gourru \email antoine.gourru@univ-st-etienne.fr \\
       \addr Université Jean Monnet Saint-Etienne, \\
       CNRS, Institut d'Optique Graduate School, \\ 
       Laboratoire Hubert Curien UMR 5516, F-42023, \\ Saint-Etienne, France
       \AND
       \name Christophe Gravier \email christophe.gravier@univ-st-etienne.fr \\
       \addr Université Jean Monnet Saint-Etienne, \\
       CNRS, Institut d'Optique Graduate School, \\ 
       Laboratoire Hubert Curien UMR 5516, F-42023, \\ Saint-Etienne, France
       }
\editor{Manuel Gomez-Rodriguez}

\maketitle

\begin{abstract}
Group fairness is a central research topic in text classification, where reaching fair treatment between sensitive groups (e.g., women and men) remains an open challenge. We propose an approach that extends the use of the Wasserstein Dependency Measure for learning unbiased neural text classifiers.
Given the challenge of distinguishing fair from unfair information in a text encoder, we draw inspiration from adversarial training by inducing independence between representations learned for the target label and those for a sensitive attribute. We further show that domain adaptation can be efficiently leveraged to remove the need for access to the sensitive attributes in the data set we cure. We provide both theoretical and empirical evidence that our approach is well-founded.

\end{abstract}

\begin{keywords} 
  natural language processing, fairness, text classification, domain adaptation, transfer
\end{keywords}

\section{Introduction}


Machine learning algorithms have become increasingly influential in decision-making processes that significantly impact our daily lives. 
One of the major challenges that has emerged in research, both academic and industrial, concerns the fairness of these models, that is, their ability to treat individuals and groups equitably without causing prejudice or discrimination.
As more researchers work to overcome these shortcomings, the first problem is to define what \emph{fairness} is. This definition may hardly be consensual \citep{han-etal-2023-fair} or is at least difficult to establish, as it depends on situational and cultural contexts \citep{fiske2017prejudices}. In this work, we focus on group fairness (that we will refer to as fairness for simplicity), which prevents predictions related to individuals from being based on sensitive attributes such as gender or ethnicity. 
We then adopt common metrics for assessing group fairness in practice, which are based on the notion of disparate impact referenced in legal frameworks across several countries.\footnote{for example, GPDR, Article 22 \citep{GDPR2016a} and AI Act \citep{AIAct}, Recital 27 in the European Union, Title VII of the 1964 Civil Rights Act \citep{act1964civil} in the United States of America.} This type of metric considers a predictive model fair if its outcomes remain consistent across groups of individuals defined by sensitive attributes.
%

In this article, we focus on the problem of fairness in the domain of Natural Language Processing (NLP)~\citep{
li2023survey, chu2024fairness} and more specifically for text classification as it is one of the most ubiquitous tasks in our society, with prominent examples in medical and legal domains~\citep{demner2009can} or human resources~\citep{jatoba2019evolution}, to name a few. For more general overviews of fairness in machine learning systems, we refer the interested readers to \cite{caton2024fairness, barocas-hardt-narayanan}.  
Initially, works in text classification rely on text encoders, which are parameterized and learned functions that map tokens (arbitrary text chunks) into a latent space of controllable dimension, usually followed by a classification layer. Built upon the Transformers architecture~\citep{vaswani2017attention}, popular Pre-trained Language Models (PLMs) such as BERT~\citep{devlin2018bert} 
leverage self-supervised learning to train the text encoder parameters. These PLMs are further fine-tuned for the supervised task at hand. 
More recently, with the advent of powerful decoder-based models, practitioners started to prompt those models for classification tasks \citep{dubey-2024-evaluating,ruan-etal-2024-large}.
While many studies already report biases in NLP systems~\citep{sun2019mitigating, hutchinson2020social, tan2019assessing, liang2021towards, bender2021dangers}, these issues become even more significant with the advent of public-ready AI-powered NLP systems. 
As mentioned above, recent developments in NLP, such as prompting-based models, raise questions about ensuring fairness in text classification. \citet{atwoodinducing} highlight the limitations of prompting for fairness control, whereas regularization-based methods achieve better fairness-performance trade-offs. Meanwhile, \citet{roccabruna-etal-2024-will} evaluate multiple large decoder-based models 
alongside RoBERTa \citep{liu2019roberta} on temporal relation classification, finding that RoBERTa outperforms all the decoder-based models for this task.
However, other approaches leverage powerful decoder models to generate embeddings for various tasks, including text classification, as seen with SFR-Embedding-2\_R \citep{SFR-embedding-2} or NV-Embed-v2 \citep{lee2024nv} 
both built on Mistral-7B \citep{jiang2023mistral}.
While some recent works adopt this embedding-based strategy \citep{yang2024diagnosing}, others continue to rely on encoder-only architectures \citep{sturman-etal-2024-debiasing}.
For fairness control in text classification, this leaves two main approaches: incorporating fairness constraints into prompts or debiasing the model during fine-tuning. 
Our work is part of this latter setting. \\

\noindent
\customparagraph{Contributions}
This paper extends our work on Wasserstein independence for text classification \citep{leteno-etal-2023-fair} to mitigate bias in text classifiers. We introduce an extensive theoretical analysis and present additional experimental results.
Our approach addresses bias directly in the latent space, making it applicable to any text encoder or decoder (e.g., BERT or Mistral). 
To proceed, we disentangle the neural signals encoding bias from those used for predictions. Disentanglement-based methods have primarily focused on images or tabular data \citep{Jang_2024_CVPR, NEURIPS2019_1b486d7a}. In this paper, we introduce an approach tailored to NLP and capable of handling less-explored scenarios, including continuous sensitive attributes and regression tasks. 
Our method overcomes a major shortcoming of prior studies that rely on access to the sensitive attributes during training - regulations, such as GDPR \citep{GDPR2016a}, impose more stringent requirements for the collection and use of protected attributes, which can, in certain cases, pose constraints on some methodologies. 
In the following, we demonstrate that our approach tackles this issue by learning from simple data sets, such as toy data sets, to transfer knowledge and enable fair classification even when sensitive attributes are not available in the deployment data.
 
In a nutshell, our goal is to reduce the dependency between predictions and sensitive attributes to improve fairness. To achieve this, we minimized the 
Wasserstein Dependency Measure \citep{ozair2019wasserstein} between the hidden representations of two neural networks: one for the end-task classification and one for predicting the sensitive attributes. This requires approximating several measures relative to the initial objective of independence between the classifier and the sensitive attribute. 
In this paper, we establish the theoretical validity of these approximations. First, we examine the relation between the chosen dependency measure and various fairness metrics. Second, we derive an upper bound on the transfer of sensitive attributes, supporting the use of predicted sensitive attributes when the real ones are unavailable. Finally, we justify the use of latent representations and provide guarantees on this approximation. We further validate our approach empirically by comparing it to state-of-the-art methods and evaluating different variations of our architecture. \\

\noindent
\customparagraph{Organization of the paper}
The rest of this paper is organized as follows. Section \ref{related_work} presents recent advances related to our proposition. Section \ref{preliminaries} discusses our motivation, provides the background knowledge to understand our contributions, and presents our first results that establish the relation between fairness and the Wasserstein Dependency Measure. Section \ref{proposition} proceeds with the theoretical framework of the proposed approach and its analysis. 
Section \ref{sec:implementation} provides the description of the proposed approach and the algorithmic details of the implementation. 
Section \ref{sec:protocol} introduces the setting of our experiments, and Section \ref{sec:results} presents the experiments and their interpretations. We present our conclusions and research perspectives in Section \ref{sec:conclusion} and end the paper with a section dedicated to the limitations of our contributions.

\section{Related Works}
\label{related_work}

Recent work on fairness in NLP has focused on fair text classification with adversarial methods \citep{beutel2017data, zhang2018mitigating,elazar-goldberg-2018-adversarial,madras2018learning, TORRES2024100092} being widely investigated. \citet{han-etal-2021-diverse,han-etal-2021-decoupling} suggest using multiple discriminators, each learning distinct hidden representations or applying adversarial training across domains. Other contributions enforce fairness through balanced training \citep{han2021balancing}, batch selection \citep{rohfairbatch}, or by integrating fairness metrics, such as Equality of Opportunity, directly into the objective function \citep{shen-etal-2022-optimising, shen2022does}. However, these methods rely on access to sensitive attribute annotations during training, limiting their practical applicability. In this work, we overcome this constraint while providing strong theoretical guarantees. 

Next, we focus on related work that considers settings where sensitive attributes are unavailable, followed by fairness approaches based on dependency measures and theoretical guarantees. \\

\noindent
\customparagraph{Sensitive attribute access for fairness mitigation}
To address their absence, proxy models have been proposed to enhance fairness. Other approaches circumvent the use of sensitive attributes during training or inference by leveraging related features \citep{zhao_toward_2O22}, knowledge distillation \citep{chai2022fairness}, adversarial reweighted learning \citep{lahoti2020fairness}, proxy features \citep{gupta2018proxy}, or perturbations \citep{awasthi2020equalized}. However, \citet{kenfack2023fairness} recently highlighted the risks associated with proxy-sensitive attributes, which may exacerbate the fairness-accuracy trade-off.
Domain adaptation has also been explored as a means to address fairness in data sets lacking demographic information. \citet{schumann2019transfer} employ adversarial learning to enforce fairness in the source domain while predicting domain membership, while \citet{coston2019fair} propose loss reweighting to mitigate the absence of sensitive attributes in either domain. Our approach follows this line of research, specifically addressing the lack of sensitive attributes in the target domain. By working in the representation space to minimize divergence between domains, we aim to ensure that the classifier trained on the source domain treats both domains equivalently. \\

\noindent
\customparagraph{Fair classification with dependency measures} 
The Wasserstein distance has been increasingly used to enforce fairness constraints in machine learning. For instance, \citet{risser2022tackling} and \citet{jiang2020wasserstein} apply it to measure the discrepancy between the distributions of predictions conditionally on groups defined by the sensitive attribute. 
Although effective, these approaches are limited to categorical sensitive attributes and mainly favor conditional independence. In contrast, we propose to exploit the Wasserstein dependency measure, which captures the dependence between the joint distribution of the hidden output representations and the sensitive attribute, and the product of their marginals. This distinction allows us to assess and mitigate bias at a more fundamental level, ensuring that the learned representations themselves do not encode sensitive information. Our approach is inspired by \citet{ozair2019wasserstein}, which uses Wasserstein's dependency measure to improve representation learning for images. However, while their work focuses on improving feature representations for downstream tasks, we incorporate sensitive attributes into the estimation process to promote fairness. 

Another related approach in NLP is proposed by \citet{cheng2021fairfil}, which maximizes the mutual information between sentence representations and their augmented counterparts to remove sensitive information from inputs. However, as noted by \citet{shen2022does} and \citet{cabello2023independence}, this does not guarantee the independence between predictions and sensitive attributes. Our method differs by explicitly minimizing the dependency between representations of the same sentence processed by two different encoders, ensuring that predictions remain unaffected by sensitive attributes.

Additionally, our work shares conceptual similarities with \citet{nam2020learning}, which addresses bias in image data. However, instead of focusing on reweighting samples to counteract biases in a secondary model, we employ the Wasserstein distance to quantify and minimize the dependency between the representations learned by two models. More recently, \citet{iskander-etal-2024-leveraging} also seeks to mitigate disparities but relies on task-specific representations and KL divergence to enforce distributional uniformity across groups.

\noindent
\customparagraph{Theoretical guarantees in fairness}
Most fairness mitigation techniques are evaluated on test sets that may not fully represent real-world deployment scenarios \citep{Dunkelau2020FairnessAwareML, hort2024bias}. 
This highlights the need for theoretical guarantees to ensure the reliability of mitigation approaches with respect to fairness metrics. 
Several works provide such guarantees, often focusing on post-training corrections. For instance, \citet{woodworth2017learning} propose a post-hoc correction method with guarantees on classifier performance and prediction disparities across sensitive attributes. \citet{denis2024fairness} derive distribution-free fairness guarantees, while \citet{chzhen2020fair} establish fairness bounds dependent only on the dimensionality of the unlabeled data set. 

On the other hand, \citet{celis2019meta} develop a meta-learning framework to obtain an optimally fair classifier with respect to algorithmic complexity, and \citet{mcnamara2017provably} show that learned representations can satisfy both group and individual fairness criteria. 
Finally, a closely related work is \citet{Gupta_Ferber_Dilkina_VerSteeg_2021}, who consider Mutual Information to measure the dependency between representations, providing fairness guarantees based on this latter. They derive an upper bound on the Demographic Parity measure via the Mutual Information between latent representations and the sensitive attributes, as well as bounds on the Mutual Information between classification labels and conditional latent representations. However, unlike our approach, they do not provide guarantees on the dependency between the classification labels and sensitive attributes.

\section{Wasserstein Dependency Measure and Group Fairness}
\label{preliminaries}
This section introduces the notations used throughout the paper, along with the definitions of key fairness metrics and the Wasserstein Dependency Measure ($I_W$). We then present our first result, establishing a link between two popular group fairness metrics and $I_W$.

\subsection{Notations}\label{subsec:notation}

We consider a corpus of $n$ triplets $\{(x_i,y_i,a_i)\}_{i=1}^n$, where $x_i\in \mathcal{X}$ is a short document or a sentence, $y_i \in \mathcal{Y}$ is a label and $a_i \in \mathcal{A}$  is either a \textit{sensitive} attribute, such as gender, ethnicity or age, or represents intersectional groups of several sensitive attributes. In this paper, we assume that $\mathcal{Y}$ and $\mathcal{A}$ are discrete spaces, and we will often abuse notations such that $y\in\mathcal{Y}$ and $a\in\mathcal{A}$ represent either a target label or a vector representation obtained through one-hot encoding. 
The embeddings (or representations) are obtained thanks to an encoding function, $Enc$, that maps words into numeric values.
The objective is to predict outcomes  $y$ for a given input $x$ by estimating the conditional distribution $p(Y| X=x)$. To this end, we learn a scoring function $\pi_y: \mathcal{X} \rightarrow \mathcal{P}(\mathcal{Y})$ where $\mathcal{P}(\mathcal{Y})$ is the set of probability distributions over $\mathcal{Y}$. Given $\pi_y(x)$, the actual prediction is denoted by $\hat{y}$ and corresponds to the label predicted as most likely. 
For instance, in a social network context, one can learn a classifier to predict whether a message is toxic. This prediction could inform decisions such as banning the message or its author from the platform.

In modern NLP applications, deep classification often follows a two-step approach: the scoring function $\pi$ is expressed as $\pi_y = h_y \circ Enc$, where $Enc(x) \in \mathbb{R}^d$ maps a text $x$ into a low-dimensional embedding space, and $h_y$, typically a simple neural network layer with a softmax activation serves as the classification layer.

\subsection{Group Fairness}

Our goal is to learn fair models and we focus on two main definitions of fairness. On the one hand, we consider demographic parity \citep{hardt2016equality} which is defined, for a desirable outcome $y$ and a sensitive attribute $a$, as
\begin{align}\label{eq:dp}
    \mathbf{DP}_{a,y} ={}& \Pb\left(\hat{Y} = y \;\middle|\; A = a\right) - \Pb\left(\hat{Y} = y\right).
\end{align}
On the other hand, we consider equality of opportunity \citep{hardt2016equality} which is defined, for an outcome $y$ and a sensitive attribute $a$, as
\begin{align}\label{eq:eo}
       \textbf{EO}_{a,y} = \mathbb{P}(\hat{Y}=Y|Y=y, A=a)-\mathbb{P}(\hat{Y}=Y|Y=y).
\end{align}

\subsection{Wasserstein Dependency Measure}
Mutual Information (MI) is an information-theoretic metric that measures the statistical dependence or the amount of information shared between two variables. 
For two random variables $U\sim p(U)$ and $V\sim p(V)$ that takes values in 
$\mathcal{U}$ and $\mathcal{V}$, respectively, the MI is defined as the KL-divergence between the joint distribution $p(U,V)$ and the product of the marginal distributions $p(U)p(V)$ 
\begin{equation*}
\begin{split}
\text{MI}(U,V) &= \text{KL}(p(U,V)\Vert p(U)p(V)). 
\end{split}
\end{equation*}

Early works in fair classification introduced the idea that fairness can be improved by reducing the Mutual Information (MI) between the classifier's output, $\hat{Y}$, and the sensitive attribute, $A$ \citep{kamishima2012, zemel13}. Specifically, enforcing Demographic Parity (DP) corresponds to minimizing the MI between these two random variables, ensuring that $\hat{Y}$ is independent of $A$. Similarly, Equalized Odds (EO) can be formulated as minimizing the MI between $A$ and $\hat{Y}$ conditionally on the true label $Y$, ensuring that predictions remain independent of the sensitive attribute within each outcome class.

However, MI is known to be intractable for most real-life scenarios and has strong theoretical limitations as outlined by \citet{pmlr-v108-mcallester20a}. Notably, it requires an exponential number of samples in the value of the MI to build a high confidence lower bound, and it is sensitive to small perturbations in the data sample. To overcome this issue,  \citet{ozair2019wasserstein} propose a theoretically sound dependency measure, the \textit{Wasserstein Dependency Measure} ($I_W$), based on the Wasserstein 1-distance

\begin{equation*}
I_W(U,V) = W_1(p(U,V), p(U)p(V)).
\end{equation*}
Using the Kantorovich-Rubinstein duality, it can also be expressed as
\begin{equation}
\label{eq:kantorovich_rubinstein_equation}
    I_W(U,V) = \sup_{||f||_L \leq 1} \mathbb{E}_{U,V \sim p(U,V)} [f(U,V)] - \mathbb{E}_{U \sim p(U),V \sim p(V)} [f(U,V)],
\end{equation}

\noindent where $||f||_L \leq 1$ is the set of all 1-Lipschitz functions. 
The Wasserstein distance has been efficiently used in many machine learning applications \citep{frogner2015learning, courty2014domain, torres2021survey} and a particularly interesting one is that of fair machine learning \citep{jiang2020wasserstein,silvia2020general,gordaliza2019obtaining,laclau2021all}. 
We present the Wasserstein distance in Appendix \ref{wasserstein_for_fairness} along with technical lemmas relevant for our proposition.

\subsection{Connection with Group Fairness}
\label{sec:group_fairness}

In this section, we show a connection between the Wasserstein Dependency Measure and the two group fairness measures we consider. Hence, in the next lemma, we show that a linear combination of Demographic Parity or Equality of Opportunity for all possible values of $a$ and $y$ are equivalent to the Wasserstein Dependency Measure between well-chosen random variables. This result is reminiscent of the result of \citet{Gupta_Ferber_Dilkina_VerSteeg_2021}, who showed a connection between group fairness and mutual information.

\begin{lemma}[Group fairness and Wasserstein Dependency Measure.\label{thm:boundedgroupfairness}]
Let $I_W$ be the Wasserstein dependency measure, and $A$, $Y$, $\hat{Y}$ be random variables corresponding to the sensitive attribute, the true label, and the predicted label, respectively. Let $\normp{\cdot}$ be the ground metric for the Wasserstein 1-distance. We have that
\begin{align*}
    I_W(\hat{Y},A) ={}& \frac{\sqrt[p]{2}}{2} \sum_{a \in \mathcal{A}} \mathbb{P}(A=a) \sum_{y\in\mathcal{Y}} \left|\textbf{DP}_{a,y}\right| \;\text{,} \\
    I_W((\hat{Y}=Y)|Y=y,A|Y=y) ={}& \sqrt[p]{2}\sum_{a \in \mathcal{A}} \mathbb{P}(A=a|Y=y) \left|\textbf{EO}_{a,y}\right| \;\text{,}
\end{align*}
with $|.|$ denoting the absolute value.
\end{lemma}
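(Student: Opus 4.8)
The plan is to compute each Wasserstein Dependency Measure directly from its definition $I_W(U,V) = W_1(p(U,V), p(U)p(V))$, exploiting the fact that both arguments are distributions supported on the finite set $\mathcal{Y}\times\mathcal{A}$ (resp.\ $\{0,1\}\times\mathcal{A}$). On a finite space, $W_1$ between two probability vectors can be made completely explicit once we choose the ground metric; here the ground metric on one-hot encoded pairs is $\normp{\cdot}$, and for two distinct one-hot vectors the $\ell_p$ distance is exactly $\sqrt[p]{2}$, which is where the constant $\sqrt[p]{2}/2$ (resp.\ $\sqrt[p]{2}$) comes from. So the first step is to set up the coupling picture: the optimal transport plan only ever has to move mass between points that differ, and every such move costs the same fixed amount $\sqrt[p]{2}$, so $W_1$ reduces to $\sqrt[p]{2}$ times the total mass that must be relocated, i.e.\ (up to the factor $\tfrac12$) the total variation distance between $p(\hat Y, A)$ and $p(\hat Y)p(A)$.

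For the first identity, I would write $p(\hat Y, A)(y,a) = \mathbb{P}(\hat Y = y, A = a) = \mathbb{P}(A=a)\,\mathbb{P}(\hat Y = y \mid A = a)$ and $p(\hat Y)p(A)(y,a) = \mathbb{P}(A=a)\,\mathbb{P}(\hat Y = y)$, so their difference at $(y,a)$ is $\mathbb{P}(A=a)\bigl(\mathbb{P}(\hat Y = y\mid A=a) - \mathbb{P}(\hat Y=y)\bigr) = \mathbb{P}(A=a)\,\textbf{DP}_{a,y}$. Summing absolute values over the support and using that $W_1$ on a space with constant pairwise cost equals that cost times one-half the $\ell_1$ distance of the two distributions (the standard identity $W_1 = \tfrac{c}{2}\sum |p_i - q_i|$ when $d(i,j)=c$ for all $i\ne j$), I get $I_W(\hat Y, A) = \tfrac{\sqrt[p]{2}}{2}\sum_a \mathbb{P}(A=a)\sum_y |\textbf{DP}_{a,y}|$. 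The second identity is the same computation carried out in the conditional world: condition everything on $Y=y$, replace $\hat Y$ by the binary variable $\mathbbm{1}[\hat Y = Y]$ and $A$ by $A \mid Y=y$, note $\mathbb{P}(\hat Y = Y \mid Y=y, A=a) - \mathbb{P}(\hat Y = Y \mid Y=y) = \textbf{EO}_{a,y}$, and observe that the factor $\tfrac12$ is absorbed because the binary variable $\mathbbm{1}[\hat Y=Y]$ only takes two values, so each unit of probability mass on the "$=$" side that is in excess is matched by exactly the same deficit on the "$\ne$" side and the total-variation-to-$\ell_1$ halving cancels against a factor of $2$ coming from summing over both outcomes. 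This is why one statement carries $\sqrt[p]{2}/2$ and the other carries $\sqrt[p]{2}$.

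The main obstacle I expect is making the optimal-transport step fully rigorous rather than hand-wavy: I need to justify that the Kantorovich problem on $\mathcal{Y}\times\mathcal{A}$ with this particular ground metric attains its optimum at a coupling that never pays more than the constant $\sqrt[p]{2}$, which amounts to proving both an upper bound (exhibit a coupling — essentially, match the common part of $p(\hat Y,A)$ and $p(\hat Y)p(A)$ and move the residual arbitrarily, each move costing $\sqrt[p]{2}$) and a matching lower bound via the dual (exhibit a $1$-Lipschitz $f$ in \eqref{eq:kantorovich_rubinstein_equation}, e.g.\ a suitably scaled indicator of the set where $p(\hat Y,A) > p(\hat Y)p(A)$, checking its Lipschitz constant with respect to $\normp{\cdot}$ is at most $1$). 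I would lean on the technical lemmas announced in Appendix~\ref{wasserstein_for_fairness} for exactly this $W_1$-on-one-hot-encodings computation, so that the body of the proof is just the bookkeeping that turns $\sum_{y,a}\bigl|\mathbb{P}(\hat Y=y,A=a) - \mathbb{P}(\hat Y=y)\mathbb{P}(A=a)\bigr|$ into the claimed $\textbf{DP}$ and $\textbf{EO}$ sums. A minor point to be careful about is the normalization convention ($p$-th root of $2$ versus $\sqrt 2$) and keeping the conditional-probability manipulations in the second line consistent with the definition of $\textbf{EO}_{a,y}$.
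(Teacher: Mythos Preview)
Your overall strategy---reduce $W_1$ on the finite support to a total-variation computation and then rewrite the $\ell_1$ difference as the claimed $\textbf{DP}$/$\textbf{EO}$ sums---is the right shape, and the final bookkeeping matches the paper's. But there is a genuine gap in the justification of the identity $W_1 = \tfrac{\sqrt[p]{2}}{2}\sum|p_i-q_i|$ on the product space $\mathcal{Y}\times\mathcal{A}$.

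The ground metric on concatenated one-hot pairs is \emph{not} constant: if $(y,a)$ and $(y',a')$ differ in both coordinates, four entries of the one-hot vector change and the $\ell_p$ distance is $\sqrt[p]{4}$, not $\sqrt[p]{2}$. So your hypothesis ``$d(i,j)=c$ for all $i\ne j$'' fails for $p<\infty$. Your Kantorovich--Rubinstein lower bound still works (the scaled sign function takes values $\pm\tfrac{\sqrt[p]{2}}{2}$ and the minimum nonzero distance is $\sqrt[p]{2}$, so it is $1$-Lipschitz). The upper bound is where the argument breaks: ``match the common mass and move the residual arbitrarily'' can incur cost $\sqrt[p]{4}$ per unit of moved mass, which only yields $W_1\le\sqrt[p]{4}\cdot TV$ and does not meet the lower bound.

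The missing ingredient is that $p(\hat Y,A)$ and $p(\hat Y)p(A)$ have the \emph{same $A$-marginal}, so there exists an optimal coupling that never transports mass across different values of $a$; within each $a$-block the pairwise cost \emph{is} the constant $\sqrt[p]{2}$, and then your total-variation formula applies blockwise. This is precisely how the paper organizes the argument: Lemma~\ref{lem:dependencytransportcost} performs the block decomposition $W_1(p(\hat Y,A),p(\hat Y)p(A))=\sum_a\mathbb{P}(A=a)\,W_1(p(\hat Y\mid A=a),p(\hat Y))$, and Lemma~\ref{lem:blocktransportcost} then gives the constant-cost $W_1=\sqrt[p]{2}\cdot TV$ identity within each block. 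Once you insert that one observation about the shared marginal, your route and the paper's coincide; the EO half is handled identically after conditioning on $Y=y$.
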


\begin{proof}
\nolinenumbers
    The proof is provided in Appendix \ref{app:group_fairness}.
\end{proof}

This lemma shows that minimizing the Wasserstein Dependency Measure between well-chosen random variables is a sound way to minimize Demographic Parity or Equality of Opportunity. This motivates the regularization of a learning algorithm by $I_W(\hat{Y}, A)$ to improve the fairness of text classifiers.

\section{Predictive and Sensitive Information Approximations}
\label{proposition}

To improve classifier fairness, we aim to minimize the Wasserstein Dependency Measure ($I_W$) between the sensitive attribute $A$ and the label predictions $\hat{Y}$. However, this optimization presents several challenges, notably having access to the sensitive attributes and requiring to differentiate a signal that went through an argmax function to obtain the label predictions. 

To address these, we first approximate the sensitive attribute labels using their predicted values, $\hat{A}$, obtained from a neural network. Then, instead of working directly with $\hat{Y}$ and $\hat{A}$, we use their hidden representations, denoted as $Z_y$ and $Z_a$, from the corresponding neural networks to overcome the non-differentiability of the argmax function. We also provide guarantees on these approximations.
This leads to the following optimization objective for learning a fair text classifier:
\begin{equation}\label{eq:loss} \arg \min \mathcal{L}(Y,h_y(Enc(X_{y}))) + \beta ~I_W(Z_y,Z_a), \end{equation} where $I_W(Z_y,Z_a) = W_1(p(Z_y,Z_a),p(Z_y)p(Z_a))$. Here, $Z_y$ and $Z_a$ represent the hidden representations from two Multi-Layer Perceptrons (MLPs): one for classification and one for the proxy model introduced in Section \ref{sec:def_demonic}. 
The function $\mathcal{L}$ ensures the classifier achieves high accuracy on $Y$ (e.g., we consider the cross-entropy for binary classification), while the second term encourages fairness by constraining the learned representations. The hyperparameter $\beta \in \mathbb{R^+}$ controls the balance between accuracy and fairness, as the two objectives may converge at different speeds.

\smallskip

We refer to this approach as Wasserstein Fair Classification (\wfc). Details on its implementation are provided in Section \ref{sec:implementation}.

\subsection{Definition of the Demonic Model}
\label{sec:def_demonic}

In the following, we use a surrogate model, referred to as the \demonic~model, for predicting the sensitive attribute $A$ without requiring explicit observation of attributes at training time.
To proceed, we assume a similar architecture as for predicting the labels: we learn a scoring function $\pi_a = h_a \circ Enc$ which, given an example $x$, outputs a probability distribution over $\mathcal{A}$, with $h_a$ a fixed classification function predicting $A$. The predicted sensitive attribute is then $\hat{a}$ and corresponds to the most likely sensitive attribute according to $\pi_a$. 
Consequently, we propose to consider $I_W(\hat{Y}, \hat{A})$ instead of $I_W(\hat{Y}, A)$ to approximate the dependency between the predictions and the sensitive attributes.
In the next theorem, we study this approximation and show that it is close to the original measure while being dependent on the \demonic~model performance.

\begin{lemma}
\label{lemma_hat_approx}
Let $\hat{Y},\hat{A},A$ be random variables that correspond to the predicted label, predicted sensitive attribute, and true sensitive attribute, respectively. Let $\normp{\cdot}$ be the ground metric for the Wasserstein 1-distance. Then, we have that
\begin{align*}
    I_W(\hat{Y},A) \leq I_W(\hat{Y},\hat{A}) + 2 \sqrt[p]{2} \Pb(A\neq\hat{A}).
\end{align*}
\end{lemma}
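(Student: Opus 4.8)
The plan is to apply the triangle inequality for the Wasserstein-$1$ distance, inserting the two intermediate measures $p(\hat{Y},\hat{A})$ and $p(\hat{Y})p(\hat{A})$ between $p(\hat{Y},A)$ and $p(\hat{Y})p(A)$. Since $W_1$ with a metric ground cost is itself a metric, and $I_W(\hat{Y},A) = W_1\big(p(\hat{Y},A),p(\hat{Y})p(A)\big)$, this gives
\begin{equation*}
I_W(\hat{Y},A) \leq W_1\big(p(\hat{Y},A),p(\hat{Y},\hat{A})\big) + W_1\big(p(\hat{Y},\hat{A}),p(\hat{Y})p(\hat{A})\big) + W_1\big(p(\hat{Y})p(\hat{A}),p(\hat{Y})p(A)\big).
\end{equation*}
The middle term is exactly $I_W(\hat{Y},\hat{A})$, so it suffices to bound the first and third terms each by $\sqrt[p]{2}\,\Pb(A\neq\hat{A})$.

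For the first term, I would exhibit an explicit coupling: since $\hat{Y}$, $A$, and $\hat{A}$ all live on a common probability space, the joint law of the pair $\big((\hat{Y},A),(\hat{Y},\hat{A})\big)$ is a valid coupling of $p(\hat{Y},A)$ and $p(\hat{Y},\hat{A})$. With the product ground metric on $\mathcal{Y}\times\mathcal{A}$, the cost it incurs is $\mathbb{E}\big[\normp{(\hat{Y},A)-(\hat{Y},\hat{A})}\big] = \mathbb{E}\big[\normp{A-\hat{A}}\big]$, because the $\hat{Y}$-coordinates coincide. Since $A$ and $\hat{A}$ are one-hot encodings, $\normp{A-\hat{A}} = 0$ when $A=\hat{A}$ and $\normp{A-\hat{A}} = \sqrt[p]{2}$ when $A\neq\hat{A}$ (two distinct one-hot vectors differ in exactly two coordinates), so $\mathbb{E}\big[\normp{A-\hat{A}}\big] = \sqrt[p]{2}\,\Pb(A\neq\hat{A})$, and $W_1\big(p(\hat{Y},A),p(\hat{Y},\hat{A})\big)$ is at most this quantity.

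For the third term, the same idea applies but the coupling must respect the independence structure of the two product measures: draw $\hat{Y}' \sim p(\hat{Y})$ independently of a pair $(A,\hat{A})$ sampled from their joint law $p(A,\hat{A})$, and form $\big((\hat{Y}',A),(\hat{Y}',\hat{A})\big)$. One checks its marginals are $p(\hat{Y})p(A)$ and $p(\hat{Y})p(\hat{A})$, and its transport cost is again $\mathbb{E}\big[\normp{A-\hat{A}}\big] = \sqrt[p]{2}\,\Pb(A\neq\hat{A})$. Combining the three bounds yields $I_W(\hat{Y},A) \leq I_W(\hat{Y},\hat{A}) + 2\sqrt[p]{2}\,\Pb(A\neq\hat{A})$.

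The main obstacle is not any single estimate — each coupling bound is routine — but the bookkeeping around the metric structure: one must invoke the triangle inequality for $W_1$ (covered by the technical lemmas in the Wasserstein appendix), use the product ground metric on $\mathcal{Y}\times\mathcal{A}$ for which $\normp{(\hat{Y},A)-(\hat{Y},\hat{A})}$ collapses to $\normp{A-\hat{A}}$, and rely on the one-hot convention so that $\normp{A-\hat{A}}$ takes the clean two-valued form $\{0,\sqrt[p]{2}\}$. Care is also needed to verify that the coupling used for the third term genuinely has the claimed product marginals, since there $\hat{Y}$ must remain independent of the sensitive coordinate in both measures.
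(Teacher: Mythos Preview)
Your proof is correct and shares the same skeleton as the paper's: both start from the triangle inequality for $W_1$, inserting $p(\hat{Y},\hat{A})$ and $p(\hat{Y})p(\hat{A})$, and then bound the two outer terms each by $\sqrt[p]{2}\,\Pb(A\neq\hat{A})$. Where you diverge is in how those bounds are obtained. The paper invokes its technical Lemmas~\ref{lem:dependencytransportcost} and~\ref{lem:blocktransportcost} to express $W_1(p(\hat{Y},A),p(\hat{Y},\hat{A}))$ \emph{exactly} as a weighted sum of total-variation distances $\frac{\sqrt[p]{2}}{2}\sum_y \mathbb{P}(\hat{Y}=y)\sum_a|\mathbb{P}(A=a|\hat{Y}=y)-\mathbb{P}(\hat{A}=a|\hat{Y}=y)|$, and then bounds that TV expression by $2\,\mathbb{P}(A\neq\hat{A}\mid\hat{Y}=y)$ using a law-of-total-probability / disjoint-union argument. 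You instead exhibit explicit couplings built from the joint law of $(\hat{Y},A,\hat{A})$ and read off the transport cost directly. Your route is more elementary---it bypasses the block-structure lemmas entirely and needs only the one-hot distance convention---while the paper's route yields sharper intermediate information (an equality for $W_1$ before the final inequality). Both reach the same endpoint, and your coupling verification for the product-measure term is handled correctly.
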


\begin{proof}
\nolinenumbers
    The proof is provided in Appendix~\ref{app:bound_prediction}.
\end{proof}
This lemma shows that replacing $A$ by $\hat{A}$ is sound when the latter is an accurate estimate of the former, that is, when $\Pb(A\neq\hat{A})$ is small. In the next theorem, we combine this result with a standard generalization result to show that this remains valid in the finite sample regime. The proof is provided in Appendix \ref{app:bound_in_domain}.

\begin{theorem}
\label{theorem_in_domain_hat_approx}
Let $\hat{A}, A \in \{0, 1\}$, and $\mathcal{H}$ be a hypothesis space of $VC$-dimension $d$. Let $\normp{\cdot}$ be the ground metric for the Wasserstein 1-distance. Assume that we have access to a training set of $m$ i.i.d. examples. Then, with probability at least $1-\delta$, we have $\forall h \in \mathcal{H}$
\begin{align*}
    I_W(\hat{Y}, A) \leq I_W(\hat{Y}, \hat{A}) + 2 \sqrt[p]{2} \left(\hat{\varepsilon} + \sqrt{\frac{4}{m} \left(dlog\frac{2em}{d}+log\frac{4}{\delta}\right)}\right),
\end{align*}
with $e$, the base of the natural logarithm and $\hat{\varepsilon}$ the empirical risk of the demonic model. 
\end{theorem}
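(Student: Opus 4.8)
The plan is to combine Lemma~\ref{lemma_hat_approx} with a classical uniform convergence bound for the $0$--$1$ loss over a hypothesis class of finite VC dimension. The starting point is the deterministic inequality $I_W(\hat{Y},A) \leq I_W(\hat{Y},\hat{A}) + 2\sqrt[p]{2}\,\Pb(A\neq\hat{A})$ supplied by Lemma~\ref{lemma_hat_approx}, which already isolates the only data-dependent quantity we still need to control, namely $\Pb(A\neq\hat{A})$.

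First I would identify $\Pb(A\neq\hat{A})$ as a generalization error. Since the demonic model predicts $\hat{A}=h(X)$ for the learned hypothesis $h\in\mathcal{H}$ (i.e.\ $h$ is the map $x\mapsto\hat a$ induced by $\pi_a=h_a\circ Enc$), the term $\Pb(A\neq\hat{A})$ is exactly the true risk $\varepsilon(h)=\Pb_{(X,A)}\bigl(h(X)\neq A\bigr)$ of $h$ under the $0$--$1$ loss on the binary sensitive-attribute prediction problem, whose empirical counterpart on the $m$ i.i.d.\ training examples is $\hat{\varepsilon}(h)=\hat{\varepsilon}$.

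Next I would invoke the standard VC generalization bound (the classical Vapnik--Chervonenkis inequality in the form commonly restated in the domain-adaptation literature): with probability at least $1-\delta$ over the draw of the sample, simultaneously for all $h\in\mathcal{H}$,
\[
\varepsilon(h) \leq \hat{\varepsilon}(h) + \sqrt{\frac{4}{m}\left(d\log\frac{2em}{d}+\log\frac{4}{\delta}\right)}.
\]
Substituting this into the bound of Lemma~\ref{lemma_hat_approx}, using $2\sqrt[p]{2}>0$ and monotonicity, yields the claimed inequality for every $h\in\mathcal{H}$ on the same high-probability event. Because Lemma~\ref{lemma_hat_approx} is exact and involves no randomness, no further union bound beyond the single VC event is needed.

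This is not a proof with a genuine obstacle so much as one requiring careful bookkeeping: the delicate points are (i) making sure the ``$\forall h\in\mathcal{H}$'' quantifier in the conclusion is attached to the uniform-convergence event rather than to the particular trained hypothesis, (ii) matching the exact constants ($4/m$, $2em/d$, $4/\delta$) to the precise version of the VC inequality being cited, and (iii) recognizing that $I_W(\hat{Y},\hat{A})$ on the right-hand side is still the population dependency measure associated with $h$; obtaining a fully empirical right-hand side would require an additional estimation term for $I_W$, which is outside the scope of this statement.
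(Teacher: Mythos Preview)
Your proposal is correct and follows essentially the same two-step argument as the paper: invoke Lemma~\ref{lemma_hat_approx} to reduce to bounding $\Pb(A\neq\hat{A})=\varepsilon(h)$, then apply the standard VC uniform-convergence inequality (with the exact constants you quote) to replace $\varepsilon$ by $\hat{\varepsilon}$ plus the complexity term. Your additional bookkeeping remarks about the uniform quantifier and the population nature of $I_W(\hat{Y},\hat{A})$ are sound and go slightly beyond what the paper makes explicit.
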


\noindent
\customparagraph{Remark} This bound indicates that minimizing $I_W(\hat{Y},\hat{A})$ allows to minimize $I_W(\hat{Y}, A)$. However, it is tight when the \demonic~model is accurately predicting the sensitive attributes. 
In other words, with an accurate \demonic~model, the bound on the error rate is low and the bound tends to the estimate $I_W(\hat{Y}, \hat{A})$. In the perfect case, where the \demonic~model achieves perfect predictions, the bound is simply $I_W(\hat{Y}, \hat{A})$. Moreover, with input data of sufficient size, the bound on the error rate $\varepsilon$ gets lower.
We will consider the case where the \demonic~model is trained on data out of the domain (transfer learning scenario) later in Section \ref{sec:cross_domain_setting}. 
Note that we can easily generalize to multi-label sensitive attributes by considering the Natarajan dimension \citep{natarajan1989learning} instead of the VC-dimension. Moreover, multiple sensitive attributes can be considered by looking at the groups' intersections to cast the problem as a multi-label one. Finally, continuous sensitive attributes can be handled by binning. However, finding the relevant
threshold for binning is a non-trivial problem. We present the different scenarios and solutions in Appendix \ref{app:scenarios_theorems_sa}.

\subsection{Demonic Model in Cross-Domain Settings}
\label{sec:cross_domain_setting}

Recall that $\hat{A}$ and the latent representations $Z_a$ are obtained through a proxy neural network trained to predict the sensitive attribute to tackle the lack of sensitive attribute annotation. As it, one can train $h_a$ on a different data set from the end-task one. 

Let us consider two data sets, the end-task data set (or target) $\mathcal{D}_{\mathcal{T}}$ and the side data set (or source) $\mathcal{D}_{\s}$. $\mathcal{D}_{\mathcal{T}} = \{x_{\mathcal{T},i},~ y_{\mathcal{T},i}\}_i^{n_{\mathcal{T}}}$ is composed of a set of features and labels, while $\mathcal{D}_{\s} = \{x_{\s,i},~ a_{\s,i}\}_i^{n_{\s}}$ is composed of a set of features and sensitive attributes. We assume that we are in the context of covariate shift: the feature distributions are different but the sensitive attribute distributions are similar ($\mathcal{A}_{\mathcal{T}} \approx \mathcal{A}_{\s}$). 

Then, we want to learn a mapping $\phi : X_{\s} \rightarrow X_{\mathcal{T}}$ and train the \demonic~model classification layer $h_a$ on the mapped $X_{\s}$: 

\begin{equation*}
    \min_{h_a, \phi}~ \mathcal{L}(h_a(Enc(X_{\s})),~ A_{\s}) + \Lambda(\phi(Enc(X_{\s})),~Enc(X_{\mathcal{T}})),
\end{equation*}
\noindent
with $\Lambda(\phi(Enc(X_{\s})),~Enc(X_{\mathcal{T}}))$ the measure of divergence between the embeddings of $\mathcal{X}_{\mathcal{T}}$ and $\mathcal{X}_{\s}$. Note that the encoder $Enc$ has to be the same for the source and target domains.

We provide experimental details in Section \ref{sec:training_the_demonic}.
Moreover, Theorem \ref{theorem_in_domain_hat_approx} can be adapted to this setting, only the approximation of the error rate of the \demonic~model changes.

\begin{theorem}
    \label{theorem_cross_domain_hat_approx}
    Assuming that $\hat{A}, A \in \{0, 1\}$. Assume that $\mathcal{D}_\mathcal{S}$ and $\mathcal{D}_\mathcal{T}$ are a source and a target distribution such that $\mathbb{P}_{\mathcal{D}_\mathcal{S}}(X=x) \neq \mathbb{P}_{\mathcal{D}_\mathcal{T}}(X=x)$ and $\mathbb{P}_{\mathcal{D}_\mathcal{S}}(A=a|X=x) = \mathbb{P}_{\mathcal{D}_\mathcal{T}}(A=a|X=x)$, that is assume a covariate-shift. Let $\normp{\cdot}$ be the ground metric for the Wasserstein 1-distance. Assume that $I_W(\hat{Y}, A)$ and $I_W(\hat{Y}, \hat{A})$ are computed on the target distribution and let $\varepsilon_{\s} = \mathbb{P}_{\mathcal{D}_\mathcal{S}}(\hat{A} \neq A)$, $\varepsilon_{\mathcal{T}} = \mathbb{P}_{\mathcal{D}_\mathcal{T}}(\hat{A} \neq A)$, then we have that
    \begin{align*}
            I_W(\hat{Y}, A) \leq I_W(\hat{Y}, \hat{A}) + 2 \sqrt[p]{2}\left(\varepsilon_{\s} + \frac{1}{2}d_{\mathcal{H}\Delta\mathcal{H}}(\mathcal{D}_{\s}, \mathcal{D}_{\mathcal{T}}) + \lambda\right), 
    \end{align*}
    where $d_{\mathcal{H}\Delta\mathcal{H}}(\mathcal{\tilde{D}}_{\s}, \mathcal{\tilde{D}}_{\mathcal{T}})$ is the $\mathcal{H}\Delta\mathcal{H}$-divergence between the marginal feature distributions $\mathcal{\tilde{D}}_{\s}$ and $\mathcal{\tilde{D}}_{\mathcal{T}}$ and $\lambda = \lambda_{\s} + \lambda_{\mathcal{T}}$ with $\lambda_{\s}$ and $\lambda_{\mathcal{T}}$ the errors of $h^* = argmin_{h\in\mathcal{H}}(\varepsilon_{\mathcal{T}}(h), \varepsilon_{\s}(h))$ with respect to $\mathcal{D}_{\s}$ and $\mathcal{D}_{\mathcal{T}}$ respectively.
\end{theorem}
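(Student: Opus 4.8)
The plan is to reduce the cross-domain statement to the in-domain Lemma~\ref{lemma_hat_approx} combined with the classical $\mathcal{H}\Delta\mathcal{H}$ domain-adaptation generalization bound. First, observe that Lemma~\ref{lemma_hat_approx} holds for any joint law of $(\hat{Y},\hat{A},A)$; applying it with every quantity evaluated on the target distribution $\mathcal{D}_\mathcal{T}$ yields
\begin{equation*}
I_W(\hat{Y}, A) \leq I_W(\hat{Y}, \hat{A}) + 2\sqrt[p]{2}\,\varepsilon_{\mathcal{T}},
\end{equation*}
so it suffices to control the target error $\varepsilon_{\mathcal{T}} = \Pb_{\mathcal{D}_\mathcal{T}}(\hat{A}\neq A)$ of the \demonic~classifier $h_a$ by the source error $\varepsilon_{\s}$, the divergence between the marginal feature distributions, and the ideal-joint-hypothesis term $\lambda$. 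Here $\hat{A}$ is identified with the output of the fixed hypothesis $h_a \in \mathcal{H}$ learned on (the mapped) source data.

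Next I would run the standard disagreement-error argument. Writing $\varepsilon_{\mathcal{D}}(h,h')$ for the expected disagreement of two hypotheses under a distribution $\mathcal{D}$, the triangle inequality for the $0/1$ loss gives, with $h^\ast = \argmin_{h\in\mathcal{H}} \bigl(\varepsilon_{\s}(h)+\varepsilon_{\mathcal{T}}(h)\bigr)$,
\begin{equation*}
\varepsilon_{\mathcal{T}}(h_a) \leq \varepsilon_{\mathcal{T}}(h^\ast) + \varepsilon_{\mathcal{T}}(h_a,h^\ast) \leq \lambda_{\mathcal{T}} + \varepsilon_{\s}(h_a,h^\ast) + \bigl|\varepsilon_{\mathcal{T}}(h_a,h^\ast) - \varepsilon_{\s}(h_a,h^\ast)\bigr|.
\end{equation*}
Since $h_a,h^\ast \in \mathcal{H}$, the disagreement set $\{x : h_a(x)\neq h^\ast(x)\}$ lies in $\mathcal{H}\Delta\mathcal{H}$, so by definition of the $\mathcal{H}\Delta\mathcal{H}$-divergence the last term is at most $\tfrac{1}{2} d_{\mathcal{H}\Delta\mathcal{H}}(\mathcal{D}_\s,\mathcal{D}_\mathcal{T})$, and one further triangle inequality gives $\varepsilon_{\s}(h_a,h^\ast) \leq \varepsilon_{\s}(h_a) + \varepsilon_{\s}(h^\ast) = \varepsilon_{\s} + \lambda_{\s}$. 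Collecting terms yields $\varepsilon_{\mathcal{T}} \leq \varepsilon_{\s} + \tfrac{1}{2} d_{\mathcal{H}\Delta\mathcal{H}}(\mathcal{D}_\s,\mathcal{D}_\mathcal{T}) + \lambda$ with $\lambda = \lambda_\s + \lambda_{\mathcal{T}}$, and substituting this into the first display gives the claimed inequality.

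The proof is essentially a splicing of two known results, so the only genuinely delicate points are bookkeeping ones: making sure every error term in the adaptation step is attached to the correct domain, checking that the covariate-shift hypothesis $\Pb_{\mathcal{D}_\s}(A{=}a|X{=}x) = \Pb_{\mathcal{D}_\mathcal{T}}(A{=}a|X{=}x)$ is exactly what licenses treating $\lambda$ as a (typically small) constant rather than an arbitrary term, and carrying the factor $\tfrac{1}{2}$ in front of $d_{\mathcal{H}\Delta\mathcal{H}}$ through from its definition so that it matches the stated bound. I would also add a remark that, because the two domains share the same conditional $A \mid X$, a single hypothesis can be simultaneously good on both labelling rules, which is precisely why the bound is informative in the transfer regime we consider; an empirical or finite-sample version can then be obtained by plugging in the standard VC estimate of $\varepsilon_{\s}$ and $d_{\mathcal{H}\Delta\mathcal{H}}$, exactly as in Theorem~\ref{theorem_in_domain_hat_approx}.
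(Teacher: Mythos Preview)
Your proposal is correct and follows exactly the paper's approach: apply Lemma~\ref{lemma_hat_approx} on the target distribution and then bound $\varepsilon_{\mathcal{T}}$ via Ben-David et al.'s domain-adaptation theorem. In fact you spell out the disagreement/triangle-inequality argument behind that theorem, whereas the paper simply cites it as a direct application of \citet[Theorem~2]{ben2010theory}.
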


\begin{proof}
\nolinenumbers
    This is a direct application of \citet[Theorem~2]{ben2010theory}.
\end{proof}

\noindent
\customparagraph{Remark} We can draw similar conclusions as for Theorem \ref{theorem_in_domain_hat_approx}. However, in this case, one must also consider the divergence between the domains, determinant to the success of the approximation. The closer the two domains are, the tighter the bound is. Therefore, if the \demonic~model decreases in accuracy due to the divergence between the source and target domains, the bound gets looser. Note that the $\mathcal{H}\Delta\mathcal{H}$-divergence between the source and target domains is restricted to the binary setting. To generalize to categorical sensitive attributes (and by extension to multiple or continuous sensitive attributes as in Theorem 3), one could consider a generalization of the $\mathcal{H}\Delta\mathcal{H}$-divergence as in \citep{sicilia2022pac}. Recall that we detail the different scenarios and solutions in Appendix \ref{app:scenarios_theorems_sa}.

\subsection{Using Latent Representations}
\label{sec:use_latent_rep}
In the previous section, we explained why using the Wasserstein Dependency Measure between the predicted labels and sensitive attributes, $I_W(\hat{Y}, \hat{A})$, instead of between the predicted labels and the true sensitive attributes, $I_W(\hat{Y}, A)$. Nevertheless, as such, we cannot consider this measure to regularize any training algorithms since the argmax operation producing the hard predictions ($\hat{Y}$) following the classification layer is not differentiable. Thus, instead of considering the network's final output, one can overcome this limitation by minimizing the $I_W$ between the latent representations of the networks $h_y$ and $h_a$, respectively referred to as $Z_y$ and $Z_a$. 
In Theorem \ref{theorem_latent_representations}, we show that the $I_W$ between the neural networks' representations is an upper bound of the $I_W$ between the predictions. 

\begin{theorem}\label{theorem_latent_representations}
    Let $\hat{Y},\hat{A}$ be random variables that correspond to the predicted label and predicted sensitive attribute, respectively. Assume that $h_y = \sigma_{\lambda}(f(Z_y))$ and $h_a = \sigma_{\lambda}(g(Z_a))$ where $\sigma_{\lambda}$ is the softmax function with temperature $\lambda$, $f$ and $g$ are both $L$-lipschitz with respect to the $p$-norm, and $Z_y$ and $Z_a$ are latent representations of the examples. Let $\normp{\cdot}$ be the ground metric for the Wasserstein 1-distance. For a given example $x$ with predicted label $\hat{y}$ and predicted sensitive attribute $\hat{a}$, let $\xi_y(x) = f(Z_y)_{\hat{y}} - \max_{y' \neq \hat{y}}f(Z_y)_{y'}$ and $\xi_a(x) = g(Z_a)_{\hat{a}} - \max_{a' \neq \hat{a}}g(Z_a)_{a'}$ be positive margins. Let $\delta = 1-\mathbb{P}(\xi_y(X)\geq \xi, \xi_a(X)\geq \xi)$ with $\xi > 0$. Let $\alpha=\sqrtp{2} \normp{\binom{|\mathcal{Y}|}{|\mathcal{A}|} - 1}(1-\delta)$ and $\iota=L(\abs{\mathcal{Y}}+\abs{\mathcal{A}})^{\abs{\frac{1}{2}-\frac{1}{p}}}$. Then, setting $\lambda = \frac{1}{\xi}\log\left(\frac{2\xi\alpha}{\iota I_W(Z_y, Z_a)}\right)$, we have that
    \begin{align*}
        I_W(\hat{Y},\hat{A}) \leq{}& 2I_W(Z_y, Z_a)\frac{\iota}{\xi} \left[1+\log \left( \max\left(4, \frac{2\xi\alpha}{\iota I_W(Z_y, Z_a)}\right) - 1\right)\right] + \sqrtp{2} \normp{\binom{|\mathcal{Y}|}{|\mathcal{A}|}-1} \delta.
    \end{align*}
\end{theorem}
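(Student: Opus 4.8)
The plan is to insert the pair of \emph{soft} classifier outputs $S_y=\sigma_\lambda(f(Z_y))$ and $S_a=\sigma_\lambda(g(Z_a))$ — which are probability vectors living in the same simplices as the one-hot encodings $\hat Y,\hat A$ — as a bridge between the hard predictions and the latent representations, and then to optimize the temperature $\lambda$. Two inequalities carry the argument: (i) $I_W(\hat Y,\hat A)\le I_W(S_y,S_a)+(\text{rounding error})$, where the rounding error shrinks as $\lambda$ grows; and (ii) $I_W(S_y,S_a)\le\lambda\iota\,I_W(Z_y,Z_a)$, which grows with $\lambda$. Trading these off against each other produces the prescribed $\lambda$ and the stated bound.

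For (i), I would first exploit that the margins $\xi_y(x),\xi_a(x)$ are strictly positive, so $\hat y=\arg\max_{y}f(Z_y)_y$ is unambiguous and, since $\sigma_\lambda$ is coordinatewise monotone, $\hat Y$ (resp.\ $\hat A$) is just the rounding of $S_y$ (resp.\ $S_a$) to its nearest simplex vertex. On the event $E=\{\xi_y(X)\ge\xi,\ \xi_a(X)\ge\xi\}$ — which has probability $1-\delta$ — every sub-dominant coordinate obeys $(S_y)_{y'}\le(S_y)_{\hat y}e^{-\lambda\xi}\le e^{-\lambda\xi}$ (more sharply $\le 1/(1+e^{\lambda\xi})$), hence $\normp{\hat Y-S_y}\le\sqrtp{2}(\abs{\mathcal{Y}}-1)e^{-\lambda\xi}$ and likewise for $a$, so that $\normp{(\hat Y-S_y,\ \hat A-S_a)}\le\sqrtp{2}\,e^{-\lambda\xi}\normp{\binom{\abs{\mathcal{Y}}}{\abs{\mathcal{A}}}-1}$ on $E$; off $E$ the crude bound $\normp{(\hat Y-S_y,\ \hat A-S_a)}\le\sqrtp{2}\normp{\binom{\abs{\mathcal{Y}}}{\abs{\mathcal{A}}}-1}$ always holds, because two simplex points lie at $p$-distance at most $\sqrtp{2}$. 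Plugging these pointwise displacements into a stability estimate for $I_W$ — bound $W_1$ of the two joint laws by the expected coordinatewise displacement, and, using that $\hat Y,\hat A$ are deterministic roundings of $S_y,S_a$, bound $W_1$ of the two products of marginals similarly, then invoke the triangle inequality for $W_1$ — yields $I_W(\hat Y,\hat A)\le I_W(S_y,S_a)+\alpha e^{-\lambda\xi}+\sqrtp{2}\normp{\binom{\abs{\mathcal{Y}}}{\abs{\mathcal{A}}}-1}\delta$ (possibly up to small multiplicative constants absorbed in the last step), recalling $\alpha=\sqrtp{2}\normp{\binom{\abs{\mathcal{Y}}}{\abs{\mathcal{A}}}-1}(1-\delta)$.

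For (ii), the map $\Phi(z_y,z_a)=(\sigma_\lambda(f(z_y)),\sigma_\lambda(g(z_a)))$ acts on the two blocks separately, hence pushes $p(Z_y,Z_a)$ onto $p(S_y,S_a)$ and $p(Z_y)p(Z_a)$ onto $p(S_y)p(S_a)$, and it is Lipschitz for the $p$-norm with constant $\le\lambda\iota$: the temperature-$\lambda$ softmax is $O(\lambda)$-Lipschitz in $\ell_2$ via the Jacobian bound $\|\mathrm{diag}(\sigma)-\sigma\sigma^{\top}\|\le 1$, converting $\ell_2\leftrightarrow\ell_p$ on $\mathbb{R}^{\abs{\mathcal{Y}}}$ and $\mathbb{R}^{\abs{\mathcal{A}}}$ costs factors $\abs{\mathcal{Y}}^{\abs{1/2-1/p}}$ and $\abs{\mathcal{A}}^{\abs{1/2-1/p}}$ (both $\le(\abs{\mathcal{Y}}+\abs{\mathcal{A}})^{\abs{1/2-1/p}}$), and $f,g$ are $L$-Lipschitz — together exactly $\iota$. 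By the contraction of $W_1$ under a Lipschitz pushforward, $I_W(S_y,S_a)=W_1(\Phi_{\#}p(Z_y,Z_a),\Phi_{\#}(p(Z_y)p(Z_a)))\le\lambda\iota\,I_W(Z_y,Z_a)$. Combining (i) and (ii), the first two summands have the shape $a\lambda+be^{-\lambda\xi}$ (with $a\propto\iota I_W(Z_y,Z_a)$ and $b\propto\alpha$), minimized over $\lambda>0$ at $\lambda^{\star}=\frac1\xi\log\frac{b\xi}{a}$ with value $\frac a\xi(1+\log\frac{b\xi}{a})$; this reproduces the prescribed $\lambda=\frac1\xi\log(\frac{2\xi\alpha}{\iota I_W(Z_y,Z_a)})$ and the leading term $2\frac\iota\xi I_W(Z_y,Z_a)[1+\log(\cdots)]$, with the inner $-1$ an artifact of the sharper bound $(S_y)_{y'}\le1/(1+e^{\lambda\xi})$ together with the change of variables, and the clamp $\max(4,\cdot)$ keeping the estimate valid and positive in the degenerate regime $\frac{2\xi\alpha}{\iota I_W(Z_y,Z_a)}\le1$, where $\lambda^{\star}$ is non-positive and no admissible temperature beats the trivial bound.

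I expect the stability estimate for $I_W$ in step (i) to be the main obstacle: because $I_W(U,V)=W_1(p(U,V),p(U)p(V))$ also depends on the product of the marginals, perturbing $\hat Y\to S_y$ and $\hat A\to S_a$ at once forces one to control $W_1(p(\hat Y)p(\hat A),p(S_y)p(S_a))$ by coupling \emph{independent} copies, and it is there that extracting the clean constant $\sqrtp{2}\normp{\binom{\abs{\mathcal{Y}}}{\abs{\mathcal{A}}}-1}$ with an exact linear dependence on $\delta$ — rather than a looser $\ell_1$-type factor or an extra multiple of $\delta$ — is delicate; the identity $p(\hat Y)p(\hat A)=(\rho\otimes\rho)_{\#}(p(S_y)p(S_a))$, with $\rho$ the vertex-rounding map, is the tool I would lean on. The only other subtlety is that the optimal temperature depends on the a priori unknown quantity $I_W(Z_y,Z_a)$, which is precisely why the statement hard-codes $\lambda$ and clamps the logarithm.
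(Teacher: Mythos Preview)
Your proposal is correct and follows essentially the same route as the paper: triangle inequality via the soft outputs $(S_y,S_a)=(\sigma_\lambda(f(Z_y)),\sigma_\lambda(g(Z_a)))$, Lipschitz contraction for the middle term giving $\lambda\iota\,I_W(Z_y,Z_a)$, the identity coupling together with the margin split on $E$ for the two edge terms, and then optimization in $\lambda$. The only refinements in the paper are that it keeps the sharper $1/(e^{\lambda\xi}+1)$ form throughout---which after a quadratic in $e^{\lambda\xi}$ produces both the $-1$ inside the logarithm and the clamp threshold $4$ (rather than the threshold $1$ that your simplified $e^{-\lambda\xi}$ form yields)---and that it handles the product-of-marginals edge term by asserting it equals the joint edge term $\mathbb{E}[c(X,X)]$, precisely the step you correctly flagged as delicate.
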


\begin{proof}
\nolinenumbers
    The proof of a slightly sharper result, in particular when $I_W(Z_y, Z_a)$ is large, is provided in Appendix~\ref{app:bound_rep}. We present this simpler version here for better readability.
\end{proof}

\noindent
\customparagraph{Remark} This result suggests that minimizing $I_W(Z_y,Z_a)$ is a sound way to minimize $I_W(\hat{Y},\hat{A})$. 
The tightness of the bound depends mainly on the error introduced by the softmax and, more specifically, on two terms: $\xi$ and $\delta$.
The margin $\xi_y(x)$ (resp. $\xi_a(x)$) measures how dominant the predicted class is relatively to the others, i.e., it is large when $\hat{Y}$ in one-hot encoded form and $\sigma_\lambda(f(Z_y))$ are close. In other words, $\xi_y(x)$ (resp. $\xi_a(x)$) represents the confidence level of the classification model and $\xi$ represents the minimum expected confidence. The term $\delta$ is the proportion of examples for which this minimum confidence is not obtained by the model. We note that there is a trade-off between the first and the second term in the bound, depending on the value of $\xi$, as a high value of $\xi$ is likely to imply a large $\delta$ and vice versa. 

This result also indicates that for a given model, there is an optimal softmax temperature for inference, but, since the models are assumed given, it does not help us finding the optimal softmax temperature at training time. Furthermore, since the softmax is followed by an argmax function, the optimal temperature at inference has a limited impact. Finally, the way it is used here, temperature scaling is mainly seen as a technical tool and it might be possible to derive a similar result without it, thus it might not be necessary. For all these reasons, we do not further investigate this term experimentally.

\section{Implementation of Wasserstein Fair Classification}
\label{sec:implementation}

In this section, we present both the overall architecture of \wfc~and the implemented training strategy.

\subsection{Architecture of \wfc}

\begin{figure}[t]
    \centering
    \resizebox{\linewidth}{!}{
        \includegraphics[]{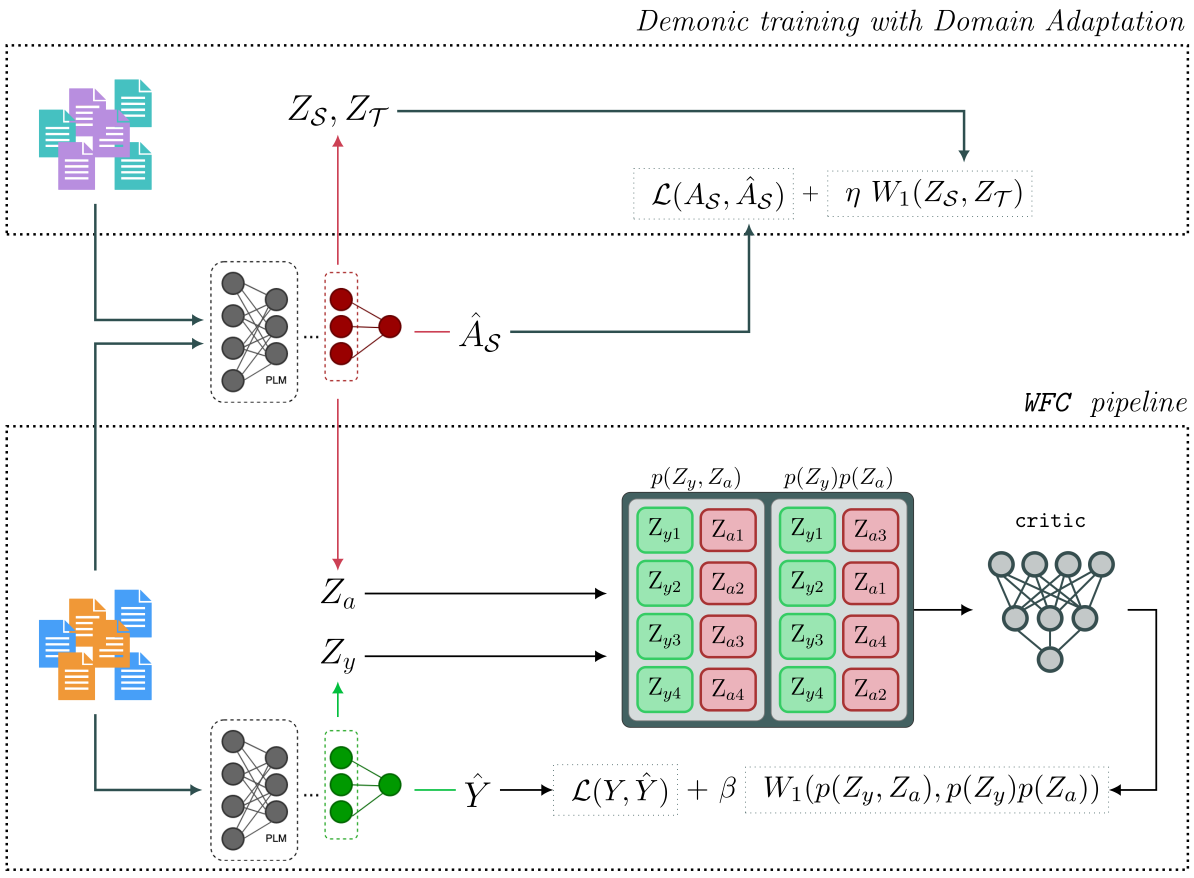}
    }
       \caption{Architecture of our method.
       The top part illustrates the pre-training of the \textit{demonic} model (red) with domain adaptation. The model is trained to predict the sensitive attribute on the source domain ($A_{\mathcal{S}}$) while minimizing the divergence between the hidden representations from the source and target domains ($Z_{\mathcal{S}}$ and $Z_{\mathcal{T}}$). 
       The bottom part describes the \wfc~ pipeline       
       for a batch of size 4, the demonic model is then frozen. The data representation on the right shows how we enforce dependency or independence between $Z_y$ and $Z_a$. 
       During inference, only the trained classifier (green) is retained to predict $Y$. 
    }
    \label{fig:architecture}
\end{figure}

The overall architecture of \wfc~is composed of three components: two classifiers and a critic (see Figure \ref{fig:architecture}). We recall that the architecture aims to minimize the loss function described in Equation \ref{eq:loss}. \\

\noindent
\customparagraph{Learning $Z_y$ and $Z_a$}Given a batch of documents along with their sensitive attribute, we start by generating a representation of each document using a pre-trained language model (PLM). These representations serve as input to two MLPs, which are trained to predict $A$ and $Y$, respectively. The first model, referred to as the \demonic~model, is pre-trained. The prediction $\hat{Y}$ outputted by the second MLP (in green in Figure \ref{fig:architecture}) is directly used to compute the first term of our objective function (see Equation \ref{eq:loss}).
Additionally, from a given hidden layer in each of the MLPs, we extract the hidden representation vectors, $Z_y$ and $Z_a$, which capture intermediate features relevant to their respective tasks. \\

\noindent
\customparagraph{Computing $I_W(Z_y, Z_a)$}
The second term of the loss is the $I_W$ between $Z_y$ and $Z_a$. To compute this latter, we use the following approximation \citep{arjovsky2017wasserstein}
\begin{equation}\label{eq:wass}
   \max_{\omega, ||C_w||_L \leq 1} \mathbb{E}_{Z_y,Z_a \sim p(Z_y,Z_a)} [C_{\omega}(Z_y,Z_a)] - \mathbb{E}_{Z_y \sim p(Z_y), Z_a \sim p(Z_A)} [C_{\omega}(Z_y,Z_a)].
\end{equation}
where $C_{\omega}$ is called the critic. Initially proposed for Wasserstein GAN (Generative Adversarial Network) \citep{arjovsky2017wasserstein}, the critic is a neural network used as an alternative to the GAN's discriminator to overcome this latter unstable training \citep{arjovsky2017towards}. It estimates the Wasserstein distance between real and fake distributions by outputting scores (the Wasserstein distance) instead of classifying samples as probabilities. 
The underlying idea is to use a neural network with a Lipschitz constraint on the weights to approximate the Kantorovich-Rubinstein formulation of the Wasserstein distance (Equation \ref{eq:kantorovich_rubinstein_equation}). The Lipschitz constraint is induced by weights clipping or gradient penalty \citep{gulrajani2017improved}.
We follow \citet{ozair2019wasserstein} 
who use it to compute the $I_W$ with this differentiable estimate of the Wasserstein distance.
To enforce the Lipschitz constraint, we clamp the weights to given values ($[-0.01, 0.01]$) at each optimization step.\footnote{We also tested some more recent improvements of Lipschitz constraint enforcement \citep{gulrajani2017improved, wei2018improving}. Interestingly, all lead to poorer performance.} 
For a batch of documents, the critic takes as input the concatenation of $Z_y$ and $Z_a$, and the concatenation of $Z_y$ and $Z_a$ randomly drawn from the data set (equivalent to $Z_y \sim p(Z_y), Z_a \sim p(Z_a)$). 
We then follow the training procedure introduced by \citet{arjovsky2017wasserstein}, which alternates maximizing Equation~\ref{eq:wass} in the \texttt{critic} parameters for $n_c$ iterations and minimizing Equation~\ref{eq:loss} for $n_d$ iterations in the $h_y$ classifier parameters. We add a comparison to \wfc$_{eo}$, where we compute and minimize the $I_W$  between instances that were well classified during the training. This allows us to compare optimizing directly DP vs. EO. 

\noindent
\customparagraph{Overall} The overview of the training process is detailed in Appendix~\ref{app:algorithm}.
The details of the MLPs used to parameterize each component, including their architecture, are given in Appendix~\ref{app:experimental_details}. We evaluate and optimize the hyperparameters for our models on a validation set, focusing on the MLP and Critic learning rates, the value of $n_d$ (number of batches used to train the main MLP), the layers producing $Z_a$ and $Z_y$, the value of $\beta$ and the value used to clamp the weights to enforce the  Lipschitz constraint. The values allowing us to obtain the optimal trade-off between accuracy and fairness (DTO, cf. Section \ref{sec:metric_protocol}) during this process are presented in Appendix~\ref{app:archi_t1}.
In all our experiments, and if not mentioned otherwise, the value of $\beta$ is set to 1.
Our implementation is available on Github: \url{https://github.com/LetenoThibaud/wasserstein_fair_classification}.

\subsection{Pre-training the \textit{Demonic} Model}
\label{sec:training_the_demonic}

\noindent
\customparagraph{Overview} We pre-train the \demonic~model, an MLP with a similar architecture as the previous classifier, to predict the sensitive attributes. Note that we do not update the \demonic~weights during the training phase of the main model. The benefits are twofold. First, unlike previous works~\citep{caton2020fairness}, we require only limited access to sensitive attribute labels during training, and we do not need access to the sensitive attributes at inference. This makes \wfc\ highly compatible with recent regulations (e.g., US Consumer Financial Protection Bureau). Second, the \demonic~model can be trained in a few-shot fashion if some examples of the training set are annotated with sensitive attributes. \\

\noindent
\customparagraph{Learning with a related data set} However, when no sensitive attributes are available in the training set, we replace the training data of the \demonic~ model with data from another domain (e.g., another data set) containing sensitive information for the same attribute. For example, for gender, we can leverage generated data sets, like the EEC data set \citep{kiritchenko2018examining}. This enables knowledge transfer between data sets, promoting fairness autonomy regardless of whether sensitive attributes are present in the data, as long as another data set with similar sensitive attributes exists. Finally, in most cases, sensitive attribute knowledge transfers easily between data sets without additional adjustments. However, when data set divergence is significant, domain adaptation techniques can be applied to ensure transfer quality. \\

\noindent
\customparagraph{Learning with domain adaptation}
If the training data set differs significantly from the end-task data set, we add a regularization term to the loss of the \demonic~model 
to train it with a double objective: 1) predicting the sensitive attribute and 2) generating representations from the source and target domains that are both close and informative for classification.
In practice, under the covariate shift assumption, we use the Wasserstein distance between the representations of the source and target data sets as a measure of divergence.
For domain adaptation, as for \wfc, a critic model estimates the Wasserstein distance between the source and target representations. We use this measure for domain adaptation as done in \citet{shen2018wasserstein}.
Note that while in \wfc~the Wasserstein distance is computed between the joint and the product of the marginal distributions of the representations to compute a measure of independence, here we compute it between the representations themselves.
Specifically, we compute the Wasserstein distance between the last hidden states of the model for both sets of representations (source and target). 
Therefore, if we consider the source and target domains, respectively $\mathcal{D}_{\s} = \{x_{\s,i},~ a_{\s,i}\}_i^{n_{\s}}$ and $\mathcal{D}_{\mathcal{T}} = \{x_{\mathcal{T},i},~ y_{\mathcal{T},i}\}_i^{n_{\mathcal{T}}}$, with $X_{\s}, X_{\mathcal{T}}$ the sets of input texts, $A_{\s}$ the sensitive attributes.
The objective of the \demonic~model, $h_a$, can be written as follows 

\begin{equation}\label{eq:loss_da}
    \arg \min \mathcal{L}(A_{\s},h_a(Enc(X_{\s})) 
    + \eta ~W_1(Z_{\s},~Z_{\mathcal{T}}),
\end{equation}
where $\mathcal{L}$ is the loss function aiming at maximizing the accuracy of $h_a$ on predicting $A$, and $Z_{\mathcal{S}},~ Z_{\mathcal{T}}$ are the hidden representations of the model respectively for $X_{\mathcal{S}}$ and $X_{\mathcal{T}}$. 

\section{Experimental Framework}
\label{sec:protocol}

In this section, we present the setting of our experiments, that is, the data sets and metrics we consider.

\subsection{Evaluation Metrics}
\label{sec:metric_protocol}

In this section, we introduce the metrics used to evaluate the performance of the models. For utility, we consider the balanced accuracy (Bal. Acc.) to handle class imbalance in the data.
For fairness, we recall in Section \ref{subsec:notation} the Equality of Opportunity (cf. Equation \ref{eq:eo}).
In our experiments, we consider binary sensitive attributes ($\mathcal{A}=\{0, 1\}$). For multi-class objectives (e.g. $\mathcal{Y}=\{1, \cdots, C\}$), one can aggregate EO scores over classes (formulated as the difference of true positive rates across sensitive groups). This measure is the TPR-parity (or TPR-GAP) score \citep{de2019bias, ravfogel-etal-2020-null} defined as follows
\begin{equation*}\label{eq:gap}
    \textbf{TPR-parity} = \sqrt{\frac{1}{|\mathcal{C}|} \sum_{c \in \mathcal{C}} (\textbf{TPR}_{1,c} - \textbf{TPR}_{0,c})^2},
\end{equation*}
with $\textbf{TPR}_{0,c}$ and $\textbf{TPR}_{1,c}$ the true positive rates for class $c$ and respectively sensitive group $0$ and $1$.
For clarity in the results' comparison with the accuracy score, we consider the following 
\begin{equation*}
    \textbf{Fairness} = (1 - \textbf{TPR-parity}) * 100.
\end{equation*}
The Fairness score indicates a perfectly fair model when equal to 100, and unfair when equal to 0. 
Additionally, as fairness often requires determining a trade-off such that reaching equity does not degrade the general classification performance, \citet{han2021balancing} proposed the Distance To Optimum (\textbf{DTO}) score. It measures the accuracy-fairness trade-off by computing the Euclidean distance from a model to an \emph{Utopia point} (point corresponding to the best accuracy and best fairness values across all the baselines). The goal is to minimize the DTO. Let consider the \emph{Utopia point} with coordinates $\{\textbf{accuracy}_u,~ \textbf{fairness}_u\}$ and the performance of a model at a given epoch $\{\textbf{accuracy}_m,~ \textbf{fairness}_m\}$, then we have

\begin{equation*}
    \textbf{DTO} = \sqrt{\left(\textbf{fairness}_u -  \textbf{fairness}_m \right)^2 + \left(\textbf{accuracy}_u -  \textbf{accuracy}_m \right)^2}.
\end{equation*}
Finally, we consider the \textbf{Leakage} metric, which measures the accuracy of a classification model trained to predict the sensitive attribute $A$ from the latent representations ($Z$) of another model. 
Let us consider two models, a classification model $h$ that we want to evaluate and another model $h_{leakage}$ trained to retrieve the sensitive information $A$ from the latent representations of $h$, $Z_h$. We consider a test set of size $n$ such that

\begin{equation*}
    \textbf{Leakage} = \left( \frac{1}{n} \sum_{i=0}^n \mathbbm{1}_{L}(Z_{hi}) \right) * 100~ \text{with}~ \mathbbm{1}_{L}(Z_h) = 
\begin{cases} 
1 & \text{if } h_{leakage}(Z_h) = A, \\
0 & \text{if } h_{leakage}(Z_h) \neq A.
\end{cases}
\end{equation*}
It measures the fairness \emph{of the latent representations themselves} and demonstrates representation unfairness when close to 100. 
We use the architecture presented in \citet{shen2022does}, that is an MLP with one hidden layer of size 100, a ReLU activation function, and a constant learning rate of 0.001. The optimizer used is Adam.

\subsection{Data Sets}
We employ two widely-used data sets to evaluate fairness in the context of text classification, building upon prior research \citep{ravfogel-etal-2020-null,han-etal-2021-diverse,shen2022does}. Both data sets are readily available in the FairLib library \citep{han2022fairlib}. \\

\noindent
\customparagraph{Bias in Bios \citep{de2019bias}.} This data set, referred to as ``Bios data set'' in the rest of the paper, consists of brief biographies from the common crawl associated with occupations (a total of $28$) and genders (male or female). As per the partitioning prepared by \citet{ravfogel-etal-2020-null}, the training, validation, and test sets comprise $257,000$, $40,000$, and $99,000$ samples, respectively. \\ 

\noindent
\customparagraph{Moji \citep{blodgett2016demographic}.} This data set contains tweets written in either ``Standard American English'' (SAE) or ``African American English'' (AAE), annotated with positive or negative polarity. We use the data set prepared by \citet{ravfogel-etal-2020-null}, which includes $100,000$ training examples, $8,000$ validation examples, and $8,000$ test examples. The target variable $Y$ represents the polarity, while the protected attribute corresponds to the ethnicity, indicated by the AAE/SAE attribute. 

\section{Results and Discussion}
\label{sec:results}

In this section, we consider three experimental axes to illustrate our method: 1) in-domain experiments compared to state-of-the-art methods, 2) cross-domain experiments, 3) analysis of the \wfc~ method. 

\subsection{Comparison with State-of-the-Art Methods}

Firstly, we compare our approach with state-of-the-art methods and different text encoders. \\

\begin{table}[t]
    \centering
        \resizebox{\textwidth}{!}{
        \begin{tabular}{lcccc}
        \hline
        \textbf{Model} & \textbf{Bal. Acc. $\uparrow$} & \textbf{Fairness $\uparrow$} & \textbf{DTO $\downarrow$} & \textbf{Leakage $\downarrow$} \\
        \hline
        *CE & $72.3 \pm 0.5^{\boldsymbol{\dagger}}$ & $61.2 \pm 1.4^{\boldsymbol{\dagger}}$ & $31.0$ & $87.9 \pm 3.3$ \\ \hdashline
        INLP + BERT$_{\text{ft}}$ & $73.3 \pm 0.0^{\boldsymbol{\dagger}}$ & $85.6 \pm 0.0^{\boldsymbol{\dagger}}$ & $8.49$ 
        & $86.7 \pm 0.6$ \\
        Adv + BERT$_{\text{ft}}$ & $75.6 \pm 0.4$ & $90.4 \pm 1.1$ & $4.03$ 
        & $78.8 \pm 6.0^{\boldsymbol{\dagger}}$ \\
        Gate + BTEO + BERT$_{\text{ft}}$ & $76.2 \pm 0.3^{\boldsymbol{\dagger}}$ & $90.1 \pm 1.30$ & $\boldsymbol{3.55}$ 
        & $100.0 \pm 0.0^{\boldsymbol{\dagger}}$ \\
        FairBatch + BERT$_{\text{ft}}$ & $75.1 \pm 0.6^{\boldsymbol{\dagger}}$ & $90.6 \pm 0.5$ & $4.47$  
        & $88.4 \pm 0.4^{\boldsymbol{\dagger}}$ \\
        EO$_{GLB}$ + BERT$_{\text{ft}}$ & $75.2 \pm 0.2$ & $90.1 \pm 0.4^{\boldsymbol{\dagger}}$ & $4.49$ 
        & $85.7 \pm 1.2$ \\
        DAFair + BERT$_{\text{ft}}$ & $\boldsymbol{79.5 \pm 0.2}^{\boldsymbol{\dagger}}$ & $73.1 \pm 1.1^{\boldsymbol{\dagger}}$ & $18.3$ & - \\ \hdashline
        Adv & $74.5 \pm 0.3^{\boldsymbol{\dagger}}$ & $81.5 \pm 2.0^{\boldsymbol{\dagger}}$ & $11.1$ 
        & - \\
        Gate + BTEO & $74.9 \pm 0.2^{\boldsymbol{\dagger}}$ & $86.2 \pm 0.3^{\boldsymbol{\dagger}}$ & $6.94$ 
        & - \\
        Con$_{dp}$ & $\textcolor{blue}{75.8 \pm 0.3}^{\boldsymbol{\dagger}}$ & $88.1 \pm 0.6^{\boldsymbol{\dagger}}$ & $4.96$ 
        & $\boldsymbol{\textcolor{blue}{54.2 \pm 0.9}}^{\boldsymbol{\dagger}}$ \\
        Con$_{eo}$ & $74.1 \pm 0.7^{\boldsymbol{\dagger}}$ & $84.1 \pm 3.0^{\boldsymbol{\dagger}}$ & $9.08$  
        & $80.1 \pm 4.2^{\boldsymbol{\dagger}}$ \\
       \wfc & $75.2 \pm 0.1$ & $\boldsymbol{\textcolor{blue}{91.4 \pm 0.3}}$ & $4.29$ 
        & $86.9 \pm 0.2$ \\
         \wfc$_{eo}$ & $ 75.1 \pm 0.1 $ & $ 91.0 \pm 0.8 $ & $ 4.39 $ & $ 85.9 \pm 0.2 $ \\
        \wfc~+ BTEO & $ 75.3 \pm 0.1 $ & $ 91.1 \pm 0.3 $ & $\textcolor{blue}{4.21}$ 
        & $ 87.2 \pm 0.5 $ \\
        \hline
        \end{tabular}}
        \caption{Results on Moji. For baselines, results are drawn from \citet{shen2022does}. We report the mean $\pm$ standard deviation over 5 runs. * indicates the model without fairness consideration, and - indicates that we cannot access the result. The best results are in bold, results in blue indicate the best results without fine-tuning BERT. $\dagger$ indicates statistical significant difference with \wfc~based on the Student's t-test.}
        \label{tab:task1_moji}
\end{table}

\noindent
\customparagraph{Baselines} 
First, we consider *CE, that is, the architecture without any form of regularization. Then, the baselines are INLP \citep{ravfogel-etal-2020-null}, the ADV method \citep{han-etal-2021-diverse}, FairBatch \citep{rohfairbatch}, GATE \citep{han2021balancing}, EO$_{GLB}$ \citep{shen-etal-2022-optimising} and Con, displaying the $dp$ and $eo$ versions \citep{shen2022does}.
\citet{shen2022does} extend some of the methods by rebalancing classes during training (+ BTEO) or fine-tuning a BERT model in addition to the trainable MLP (+ BERT$_{ft}$).
We also consider DAFair \citep{iskander-etal-2024-leveraging} in our baselines due to the proximity with our work as indicated in Section \ref{related_work}, and rerun their experiments with similar settings as the
authors on respective data sets to ensure comparable results with regards to the splits and seeds. If not mentioned otherwise, the results of the other baselines are drawn from \citet{han2022fairlib} and \citet{shen2022does}. We did not rerun it since we built our code on the Fairlib library made available by the authors. As such, the base architectures, evaluation protocols, seeds, and data splits are similar. \\

\noindent
\customparagraph{Setting} To compare our method against state-of-the-art approaches, we first use the representation generated by a base BERT model as an input to the MLPs. For Bios, the \demonic~MLP is trained on 1\% of the training set and obtains 99\% accuracy for predicting the sensitive attributes on the test set. Similarly, the \demonic~MLP obtains 88.5\% accuracy on Moji.
Except for the standard cross-entropy loss without a fairness constraint (CE) and the DAFair baseline, which we run ourselves, we report results from~\citet{shen2022does,han2022fairlib} as mentioned in \S Baselines. In our approach, embedding representations are derived from a fixed BERT model, with only the MLP weights being adjusted. We also evaluate the quality of our method under balanced training as in \citet{shen2022does}. \\

\begin{table}[t]
    \centering
    \resizebox{\textwidth}{!}{
        \begin{tabular}{lcccc}
        \hline
        \textbf{Model} & \textbf{Bal. Acc. $\uparrow$} & \textbf{Fairness $\uparrow$} & \textbf{DTO $\downarrow$} & \textbf{Leakage $\downarrow$} \\
        \hline
        *CE & $82.3 \pm 0.2$ & $85.1 \pm 0.8^{\boldsymbol{\dagger}}$ & $5.87$ & $98.0 \pm 0.0^{\boldsymbol{\dagger}}$ \\ \hdashline
        INLP + BERT$_{\text{ft}}$ & $82.3 \pm 0.0$ & $88.6 \pm 0.0^{\boldsymbol{\dagger}}$ & $2.61$ & $97.6 \pm 0.1^{\boldsymbol{\dagger}}$ \\
        Adv + BERT$_{\text{ft}}$ & $81.9 \pm 0.2^{\boldsymbol{\dagger}}$ & $ 90.6 \pm 0.5^{\boldsymbol{\dagger}}$ & $1.81$ & $88.6 \pm 4.6^{\boldsymbol{\dagger}}$ \\
        Gate + BTEO + BERT$_{\text{ft}}$  & $\boldsymbol{83.7 \pm 0.2}^{\boldsymbol{\dagger}}$ & $90.4 \pm 0.9^{\boldsymbol{\dagger}}$ & $\boldsymbol{0.40}$ & $100.0 \pm 0.0^{\boldsymbol{\dagger}}$ \\
        FairBatch + BERT$_{\text{ft}}$  & $82.2 \pm 0.1^{\boldsymbol{\dagger}}$ & $89.5 \pm 1.3$ & $1.98$ & $98.0 \pm 0.3^{\boldsymbol{\dagger}}$ \\
        EO$_{GLB}$ + BERT$_{\text{ft}}$ & $81.7 \pm 0.4^{\boldsymbol{\dagger}}$ & $88.4 \pm 1.0$ & $3.12$ & $97.2 \pm 0.5$ \\
        DAFair + BERT$_{\text{ft}}$ & $\boldsymbol{83.7 \pm 0.1}^{\boldsymbol{\dagger}}$ & $86.4 \pm 0.3^{\boldsymbol{\dagger}}$ & 4.40 & - \\ \hdashline
        Adv & $81.1 \pm 0.1^{\boldsymbol{\dagger}}$ & $87.3 \pm 0.9^{\boldsymbol{\dagger}}$ & $4.36$ & - \\
        Gate + BTEO & $79.4 \pm 0.1^{\boldsymbol{\dagger}}$  & $\boldsymbol{{\color{blue} 90.8 \pm 0.2}}^{\boldsymbol{\dagger}}$ & $4.30$ & - \\
        Con$_{dp}$ & $82.1 \pm 0.2^{\boldsymbol{\dagger}}$ & $84.3 \pm 0.8^{\boldsymbol{\dagger}}$ & $6.69$ & $\boldsymbol{{\color{blue} 76.3 \pm 1.5}}^{\boldsymbol{\dagger}}$ \\
        Con$_{eo}$ & $81.8 \pm 0.3^{\boldsymbol{\dagger}}$ & $85.2 \pm 0.4^{\boldsymbol{\dagger}}$ & $5.91$ & $84.9 \pm 3.4^{\boldsymbol{\dagger}}$ \\
        \wfc & ${\color{blue} 82.4 \pm 0.1}$ & $ 89.0 \pm 0.3 $ & $ 2.22 $ & $ 96.5 \pm 0.5 $ \\
        \wfc$_{eo}$ & $ 82.1 \pm 0.2^{\boldsymbol{\dagger}}$ & $ 89.0 \pm 0.2 $ & $ 2.42 $ & $ 97.4 \pm 0.3^{\boldsymbol{\dagger}}$ \\
        \wfc~+ BTEO & $ 82.3 \pm 0.2 $ & $ 89.1 \pm 0.3 $ & $ {\color{blue} 2.20} $ & $ 96.7 \pm 0.5 $ \\ 
        \hline
        \end{tabular}
        }
    \caption{Results on Bios. For baselines, results are drawn from \citet{shen2022does}. We report the mean $\pm$ standard deviation over 5 runs. * indicates the model without fairness consideration, - indicates that we do not have access to these results. The best results are in bold, results in blue indicate the best results without fine-tuning BERT. $\dagger$ indicates statistical significant difference with \wfc~based on the Student's t-test.}
    \label{tab:task1_bios}
\end{table}

\noindent
\customparagraph{Discussion}
We compare \wfc~with text classification baselines. For Moji (Table \ref{tab:task1_moji}), the accuracy of \wfc\  is higher than the accuracy of CE, and it is equivalent to competitors. Considering the fairness metrics, we outperform all baselines. 
Note that DAFair, related to our work with the KL-divergence as dependency measure, outperforms all baselines in terms of accuracy with a limited gain of Fairness.
For Bios (Table \ref{tab:task1_bios}), our method is competitive with the other baselines and ranks 4 out of 12 with BTEO and 5 without it in terms of accuracy-fairness trade-off (DTO). 
Especially, \wfc~has the second-best accuracy compared to baselines. 
Moreover, on both data sets, we obtain similar Leakage as the comparable baselines, we further discuss this score in Section \ref{sec:beta_leakage}.

Note that BERT is not fine-tuned during our training pipeline. This decision is based on several factors: first, fine-tuning BERT increases training complexity and may hinder convergence. Additionally, it makes our method flexible to any encoder or decoder architecture, regardless of size. 
However, among the baselines without BERT fine-tuning, we reach the lowest DTO, comparable to those obtained with methods that fine-tune BERT.

When comparing the versions of \wfc~ optimizing EO or DP and rebalancing classes, we report close results on the three approaches. Noting a slightly better DPO on the version optimizing DP (\wfc), we consider this version in the other experiments. Despite the better DTO of \wfc~ + BTEO, we do not choose it for the experiments in Sections \ref{sec:cross_domain} and \ref{sec:component_exploration} to evaluate the method without external influence.

Ultimately, compared to the baselines, our method demonstrates notable advantages, particularly its ability to achieve competitive performance without access to sensitive attributes in the training set. We assess this capability in the section \ref{sec:cross_domain}. In the next subsection, we explore an alternative model for generating the representations used by the classifier.

\subsubsection{Using recent decoder-based model}

\noindent
\customparagraph{Setting} State-of-the-art baselines use BERT representations. However, recent PLMs have surpassed BERT's performance. Additionally, many modern embedding models are based on a decoder architecture. Therefore, we assess the robustness of our method using representations from SFR-Embedding-2\_R
model\footnote{\url{https://huggingface.co/Salesforce/SFR-Embedding-2_R}} \citep{SFR-embedding-2} built on the Mistral model \citep{jiang2023mistral}. This model is ranked first on the MTEB benchmark\footnote{\url{https://huggingface.co/spaces/mteb/leaderboard}} \citep{muennighoff2022mteb} on July 8th, 2024, notably for the classification task. 
We realize this set of experiments on the Bios data set and exclude the Moji data set since we do not have access to the raw text, and that the embeddings depend on the DeepMoji model \citep{felbo2017using}. The \demonic~MLP is also trained on SFR-Embedding-2\_R's representations.
We compare our approach to the cross-entropy without regularization (CE), as well as the best baselines on BERT concerning fairness and accuracy (respectively, GATE and ADV). The approaches are evaluated with and without balanced training (BTEO). We realize hyperparameter tuning for all methods as described in Appendix \ref{app:details_task_4}.

\begin{table}[t]
    \centering
    \begin{tabular}{lcccc}
        \hline
        \textbf{Model} & \textbf{Bal. Acc. $\uparrow$} & \textbf{Fairness $\uparrow$} & \textbf{DTO $\downarrow$} & \textbf{Leakage $\downarrow$} \\
        \hline
        *CE & $\boldsymbol{85.5 \pm 0.09}^{\boldsymbol{\dagger}} $ & $ 86.1 \pm 0.36 ^{\boldsymbol{\dagger}}$ & $ 6.63 $ & $ 97.9 \pm 0.41 $ \\
        GATE & $ 85.3 \pm 0.23 $ & $ 83.5 \pm 0.60 ^{\boldsymbol{\dagger}}$ & $ 9.22 $ & $ 100.0 \pm 0.01 ^{\boldsymbol{\dagger}}$ \\
        GATE + BTEO & $ 84.4 \pm 0.14 ^{\boldsymbol{\dagger}}$ & $\boldsymbol{ 92.7 \pm 0.67 }^{\boldsymbol{\dagger}}$ & $ \boldsymbol{1.10} $ & $ 99.9 \pm 0.13 ^{\boldsymbol{\dagger}}$ \\
        ADV & $ 84.8 \pm 0.72 $ & $ 90.3 \pm 0.40 $ & $ 2.49 $ & $ 89.1 \pm 7.96 ^{\boldsymbol{\dagger}}$ \\
        ADV + BTEO & $ 84.3 \pm 0.07 ^{\boldsymbol{\dagger}}$ & $ 91.4 \pm 0.41 ^{\boldsymbol{\dagger}}$ & $ 1.74 $ & $ \boldsymbol{86.2 \pm 6.05} ^{\boldsymbol{\dagger}}$ \\
        \wfc & $ 85.2 \pm 0.02 $ & $ 90.0 \pm 0.21 $ & $ 2.74 $ & $ 97.8 \pm 0.41 $ \\
        \wfc~ + BTEO & $ 85.1 \pm 0.06 ^{\boldsymbol{\dagger}}$ & $ 90.0 \pm 0.25 $ & $ 2.75 $ & $ 97.8 \pm 0.34 $ \\
        \hline
        \end{tabular}
        \caption{SFR-Embeddings-2\_R Results Bios. We report the mean $\pm$ standard deviation over 5 runs. * indicates the model without fairness consideration. The best results are in bold. $\dagger$ indicates statistical significant difference with \wfc~based on the Student's t-test.}
        \label{tab:results_sfr}
\end{table}

\noindent
\customparagraph{Discussion} We evaluate the efficiency of our architecture on recent decoder-based models to generate the embedding representations and compare them with the best baselines on the BERT-encoding results. We perform this evaluation on the Bios data set as explained above and present results in Table \ref{tab:results_sfr}. We observe an improvement of both accuracy and fairness for all methods compared to the results with a BERT encoder. However, in this experiment, improving fairness comes at the cost of performance compared to the model without regularization (*CE). Among all baselines, ours enhances fairness while minimizing performance the less. In contrast, other baselines that improve fairness (GATE + BTEO, ADV, and ADV + BTEO) lead to a performance drop of up to one point.

\subsection{Cross-domain \wfc}
\label{sec:cross_domain}

We consider two experiments to assess the transfer of sensitive attributes: with and without the domain adaptation procedure. We conduct these experiments on Bios, as other data sets with gender annotations are already available, unlike AAE/SAE data sets for Moji. 

The main objective of this section is to evaluate the performance of \wfc~when the \demonic~is trained on other sources than the task data set. 

\subsubsection{Zero-shot cross-domain demonic training}

\noindent
\customparagraph{Setting} We consider two source data sets to train the \demonic~MLP without domain adaptation. The EEC data set \citep{kiritchenko2018examining} consists of 8,640 synthetic sentences in English for Sentiment Analysis. The Marked Personas (MP) data set \citep{cheng2023marked} is composed of 2,700 descriptions of individuals obtained using a generative procedure: we consider the \texttt{dv2} version. 
We then evaluate the \wfc~ pipeline with those  \demonic~MLP. When training on the EEC data set, we obtain, on average over 5 runs, 98.1\% of accuracy,  and 98.4\% on the MP data set. \\

\begin{table}[t]
        \centering
        \begin{tabular}{lccccc}
        \toprule
        \textbf{Data} & \textbf{Bal. Acc. $\uparrow$} & \textbf{Fairness $\uparrow$} & \textbf{DTO $\downarrow$} & \textbf{Leakage $\downarrow$} & \begin{tabular}{c}
            \textbf{Demonic} \\
            \textbf{Bal. Acc. $\uparrow$}
        \end{tabular} \\
        \midrule
        Bios 1\% & $ \textbf{82.4} \pm \textbf{0.1} $ & $ \textbf{89.0} \pm \textbf{0.3} $ & $ \textbf{2.22} $ & $ 96.5 \pm 0.5 $ & $\boldsymbol{99.0}$  \\
        EEC  &  $ 82.2 \pm 0.4 $ & $ 88.9 \pm 0.4 $ & $ 2.42 $ & $ 97.5 \pm 0.3 $ & 98.1 \\
        MP & $ \textbf{82.4} \pm \textbf{0.3} $ & $ 88.9 \pm 0.4 $ & $ 2.30 $ & $ \textbf{96.4} \pm \textbf{0.5} $ & 98.4 \\
        \bottomrule
        \end{tabular}
        \caption{Comparison between several scenarios for training the \demonic~model for prediction on Bios. We report the mean $\pm$ standard deviation over 5 runs.}
        \label{tab:diff_demon}
    \end{table}

\noindent
\customparagraph{Discussion}
Table \ref{tab:diff_demon} shows that when the source and target data sets are similar, we achieve results comparable to those obtained when pre-training is performed using the same data set. The average loss in accuracy and fairness is minimal, with the standard deviation causing the measurements to overlap. These results are promising for improving fairness, especially in situations where collecting sensitive data is not feasible or when only partial information is available. In the next subsection, we investigate when the divergence between the source and target is higher and consider domain adaptation to train the \demonic~model.

\subsubsection{Demonic training with domain adaptation}

\noindent
\customparagraph{Setting and protocol}
We begin by considering a variant of the MP data set for this experiment. A set of gendered words (listed in Appendix \ref{app:cross_domain}) is removed from the texts to increase the divergence with the Bios data set. 
Next, we train a \demonic~model on this data set with the values of regularization $\eta \in \{0.5, 1, 2\}$ on the domain adaptation term in Equation 
\ref{eq:loss_da} recalled below
\begin{equation*}
    \arg \min \mathcal{L}(A_{\s},h_a(Enc(X_{\s})) 
    + \eta ~W_1(Z_{\mathcal{S}},~Z_{\mathcal{T}}).
\end{equation*}
We run the pipeline for 15000 epochs; at each epoch, the critic is trained on 20 batches and the model on 5 batches. 
We assess different values for the learning rate on a validation set and obtain the following optimal learning rate: $1e^{-5}$.
We also compare to the baseline, which consists of training the \demonic~for 20 epochs on the source data set only, without any adaptation. For the baseline, the \demonic~model is optimized with the following objective
\begin{equation*}
    \arg \min \mathcal{L}(a,h_a(z_{source}))
\end{equation*}
Finally, we run the \wfc~ pipeline with the \demonic~obtained as in the previous set of experiments. \\

\begin{table}[t]
    \centering
    \begin{tabular}{ccc}
        \toprule
         Method & Accuracy on $\mathcal{S}$ & Accuracy on $\mathcal{T}$  \\ \midrule
        baseline & $ 65.3  \pm  3.23 $ & $ 75.3  \pm  13.9 $ \\
        $\eta = 0.5 $ & $ 75.0  \pm  0.00 $ & $ 96.5  \pm  0.94 $ \\
        $\eta = 1 $ & $ \boldsymbol{81.3  \pm  0.67} $ & $ \boldsymbol{98.0  \pm  0.24} $ \\
        $\eta = 2 $ & $ 75.0  \pm  0.00 $ & $ 95.9  \pm  0.27 $ \\
         \bottomrule
    \end{tabular}
    \caption{Performance of the \demonic~model trained with domain adaptation. Performance on the source is given for the best corresponding performance on the target set.}
    \label{tab:demonic_da_performance}
\end{table}

\noindent
\customparagraph{Cross-domain demonic performance}
As shown in Table \ref{tab:demonic_da_performance}, the domain adaptation procedure significantly improves the performance of the \demonic~model on the sensitive attributes predictions when the domains diverge. Note that the value of $\eta$ matters; with a lower $\eta$, the adaptation may be too weak to align the domains, whereas with a higher $\eta$, the regularization term may overly influence the classification term in the loss.

Interestingly, for the case of gender, when the most common expressions of gender are removed from the source but remain in the target domain, the procedure also helps to improve the performance of the \demonic~model on the source domain. Furthermore, we note better scores on the target domain than on the source. We hypothesize that the presence of the gender markers in the target domain makes the task easier, inducing better results.

Finally, it is interesting to note the variance on the \textit{baseline demonic}: while in some cases domain adaptation will not be necessary, the procedure ensures an efficient \demonic~ model without regard to the initial conditions of the optimization. \\

\begin{table}[h]
    \centering
    \resizebox{\textwidth}{!}{
        \begin{tabular}{lccccc}
        \hline
        \textbf{Model} & \textbf{Bal. Acc. $\uparrow$} & \textbf{Fairness $\uparrow$} & \textbf{DTO $\downarrow$} & \textbf{Leakage $\downarrow$} & \textbf{Demonic accuracy $\uparrow$} \\
        \hline
        *CE & $82.3 \pm 0.20$ & $85.1 \pm 0.80$ & $5.87$ & $98.0 \pm 0.00$ & - \\
        Baseline & $ 82.5 \pm 0.05 $ & $ 86.8 \pm 0.50 $ & $ 4.19 $ & $ 97.1 \pm 0.44 $ & $ 75.3  \pm  13.9 $ \\
        $\eta = 0.5$ & $ \boldsymbol{82.5 \pm 0.02} $ & $ 87.4 \pm 0.21 $ & $ 3.57 $ & $ \boldsymbol{96.6 \pm 0.36} $ & $ 96.5  \pm  0.94 $ \\
        $\eta = 1.0$ & $ 82.4 \pm 0.09 $ & $ \boldsymbol{88.7 \pm 0.47} $ & $ \boldsymbol{2.50} $ & $ 96.7 \pm 0.31 $ & $ \boldsymbol{98.0  \pm  0.24} $ \\
        $\eta = 2.0$ & $ 82.5 \pm 0.06 $ & $ 87.2 \pm 0.15 $ & $ 3.79 $ & $ 96.7 \pm 0.10 $ & $ 95.9  \pm  0.27 $ \\
        \hline
        \end{tabular}
        }
    \caption{Results on Bios with a \demonic~trained with domain adaptation. We report the mean $\pm$ standard deviation over 5 runs. * indicates the model without fairness consideration.}
    \label{tab:results_wfc_da}
\end{table}

\noindent
\customparagraph{\wfc~ results with cross-domain demonic}
Table \ref{tab:results_wfc_da} reports the results of the \wfc~ pipeline on the Bios data set when using domain adaptation during the \demonic~training. We note that thanks to the improvement of the accuracy of the \demonic~model, the fairness on the end-task is improved compared to both the pipeline without fairness consideration and the pipeline where the \demonic~model is trained without adaptation. With domain adaptation, the improved performance of the \demonic~model is reflected in the enhanced fairness. This experiment highlights the importance of an accurate \demonic~model and the advantages of considering domain adaptation when training it on data sets diverging from the end-task data set. 

\subsection{\wfc~ Architecture Components Investigation}
\label{sec:component_exploration}

In this section, we first investigate the hyperparameter $\beta$, its impact on the representation fairness (Leakage), and on fairness metrics related to the bounds from Lemma \ref{thm:boundedgroupfairness}. Next, we study the use of the representations from different layers in the two MLPs (classifier and demonic models). Finally, we explore the use of the predicted sensitive attributes instead of the hidden representations of the demonic model as previously done.

\subsubsection{Impact of the hyperparameter $\beta$}
\label{sec:beta_leakage}
\noindent
\customparagraph{Setting} In this experiment, we investigate the impact of the hyperparameter $\beta$ associated with the regularization term. Recall that our objective is the following
$$\arg \min \mathcal{L}(Y,h_y(Enc(X_{y}))) + \beta ~I_W(Z_y,Z_a),$$
where $\beta$ controls the impact of the Wasserstein Dependency Measure on the loss. 
We train the model over 5 seeds for different values of $\beta$. Specifically, $\beta \in \{0.1,~1,~2,~5,~10,~100\}$. \\

\begin{table}[t]
    \centering
    \resizebox{\textwidth}{!}{
    \begin{tabular}{l|cccc||cccc}
        \hline
         $\beta$ & \textbf{Acc. $\uparrow$} & \textbf{Fair. $\uparrow$} & \textbf{DTO $\downarrow$} & \textbf{Leak. $\downarrow$} & \textbf{Acc. $\uparrow$} & \textbf{Fair. $\uparrow$} & \textbf{DTO $\downarrow$} & \textbf{Leak. $\downarrow$} \\ \hline
         & \multicolumn{4}{c||}{\textbf{Bios}} & \multicolumn{4}{c}{\textbf{Moji}} \\ \hline
         0.1 & $ \boldsymbol{82.8 \pm 0.1} $ & $ 87.2 \pm 0.4 $ & $ 3.75 $ & $ 98.1 \pm 0.2 $ &  $ 50.4 \pm 0.7 $ & $ \boldsymbol{99.5 \pm 1.1} $ & $ 27.08 $ & $ 85.7 \pm 0.1 $ \\
        1.0 & $ 82.4 \pm 0.1 $ & $ \boldsymbol{89.0 \pm 0.3} $  & $ \boldsymbol{2.22} $ & $ 96.7 \pm 0.5 $ & $\boldsymbol{75.2 \pm 0.1}$  & $91.4 \pm 0.3$ &  $\boldsymbol{1.00}$  &  $86.9 \pm 0.2$  \\
        5.0 & $ 81.8 \pm 0.2 $ & $ 88.9 \pm 0.2 $  & $ 2.69 $ & $ 91.8 \pm 1.4 $ & $ 71.4 \pm 0.5 $ & $ 93.7 \pm 0.4 $ & $ 5.38 $ & $ \boldsymbol{81.1 \pm 0.5} $ \\ 
        10.0 & $ 81.6 \pm 0.2 $ & $ 88.6 \pm 0.2 $  & $ 3.06 $ & $ 86.1 \pm 0.8 $ & $ 70.1 \pm 0.6 $ & $ 92.7 \pm 0.4 $ & $ 6.21 $ & $ 82.5 \pm 0.8 $ \\ 
        100.0 & $ 81.2 \pm 0.4 $ & $ 87.9 \pm 0.4 $ & $ 3.84 $ & $ \boldsymbol{77.7 \pm 1.7} $ & $ 67.9 \pm 1.4 $ & $ 94.7 \pm 1.1 $ & $ 8.9 $ & $ 83.0 \pm 0.5 $ \\ \hline
    \end{tabular}}
    \caption{Study of the impact of $\beta$. We report the mean $\pm$ standard deviation over 5 runs.}
    \label{tab:beta_exp}
\end{table}

\noindent
\customparagraph{Discussion} First, we note in Table \ref{tab:beta_exp} that with a higher $\beta$, the Leakage decreases, meaning the sensitive attribute is harder to retrieve from the latent representations. Although we initially aim to improve the Fairness while maintaining the Accuracy of the model, our method can be used to improve the Leakage by increasing the value of $\beta$ in Equation \ref{eq:loss}. In other words, we give more importance to the Wasserstein regularization in the loss; as observed in Figure \ref{fig:wasserstein_beta_fairness_metrics} where increasing the importance of the regularization term allows having a lower $I_W(Z_y, Z_a)$.\footnote{Note that the values are computed exactly using the POT library \citep{flamary2021pot}.}
However, on both data sets, the Accuracy that we want to preserve decreases and the trade-off worsens as the Leakage gets better. In other words, reducing the Leakage makes it more challenging to retrieve sensitive attributes, but could result in unintended information loss needed for the classification task, affecting the performance. Ultimately, we want to enhance fairness while keeping a good performance, and this objective may not necessarily match with a strong Leakage improvement \citep{shen2022does}.
Indeed, \citet{lohaus2022two} show that training neural networks to satisfy fairness constraints (e.g., demographic parity) is often done by making the model more aware of the sensitive attributes, and that stronger fairness constraints make these attributes more recoverable from their internal states.
As such, combining privacy and fairness remains an open challenge.

Finally, note that on the Moji data set, the performance for $\beta=0.1$ is surprisingly low, this is due to the selection criterion used: the DTO. Indeed, when looking at the best results for this setting, we have an accuracy of $73\pm0.0$ for a fairness of $68.5\pm0.0$. This can be explained by the fact that the fairness regularization term is too low to improve fairness on this data set, then the results for the best accuracy are close to the \textit{CE}-baseline results (cf. Table \ref{tab:task1_moji}). However, at initialization with an inaccurate classifier, the fairness is very high, thus the optimal trade-off is obtained with these values.

In the next subsection, we investigate the relation of the Wasserstein Dependency Measure between the latent representations with the fairness metrics for different values of $\beta$.

    

\begin{figure}[h]
    \centering
    \begin{subfigure}{0.49\textwidth}
        \centering
        \includegraphics[width=\textwidth]{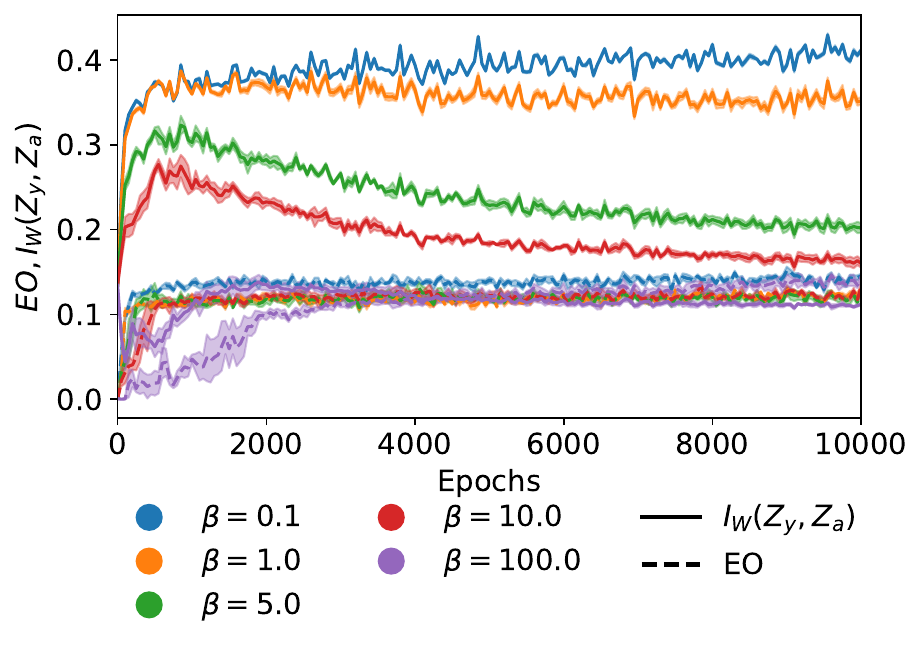}
        \caption{Bios - Equality of Opportunity}
        \label{fig:wasserstein_beta_fairness_metrics_a}
    \end{subfigure}
    \hfill
    \begin{subfigure}{0.49\textwidth}
        \centering
        \includegraphics[width=\textwidth]{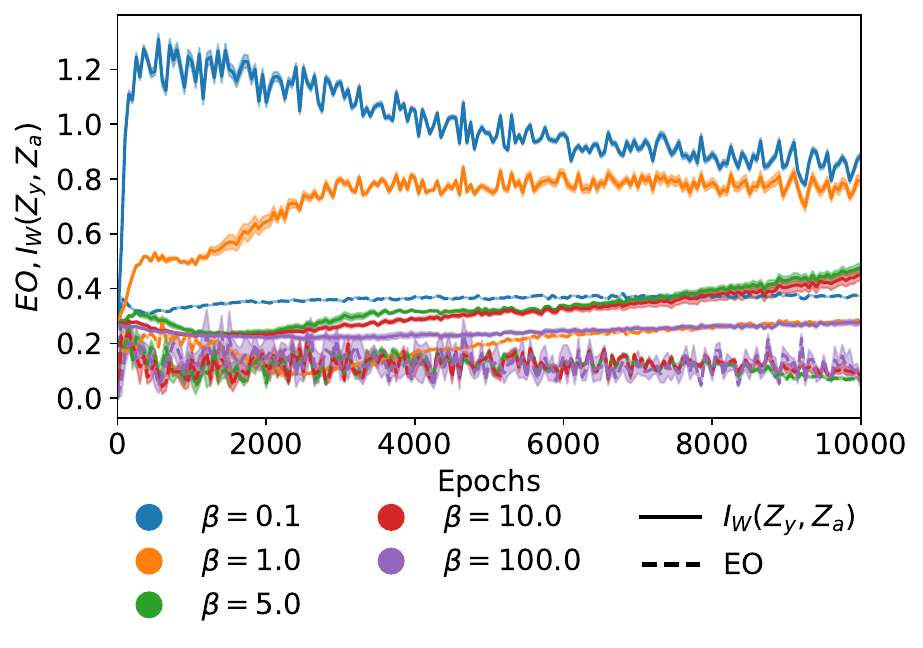}
        \caption{Moji - Equality of Opportunity}
        \label{fig:wasserstein_beta_fairness_metrics_b}
    \end{subfigure}

    \vspace{1em}

    \begin{subfigure}{0.49\textwidth}
        \centering
        \includegraphics[width=\textwidth]{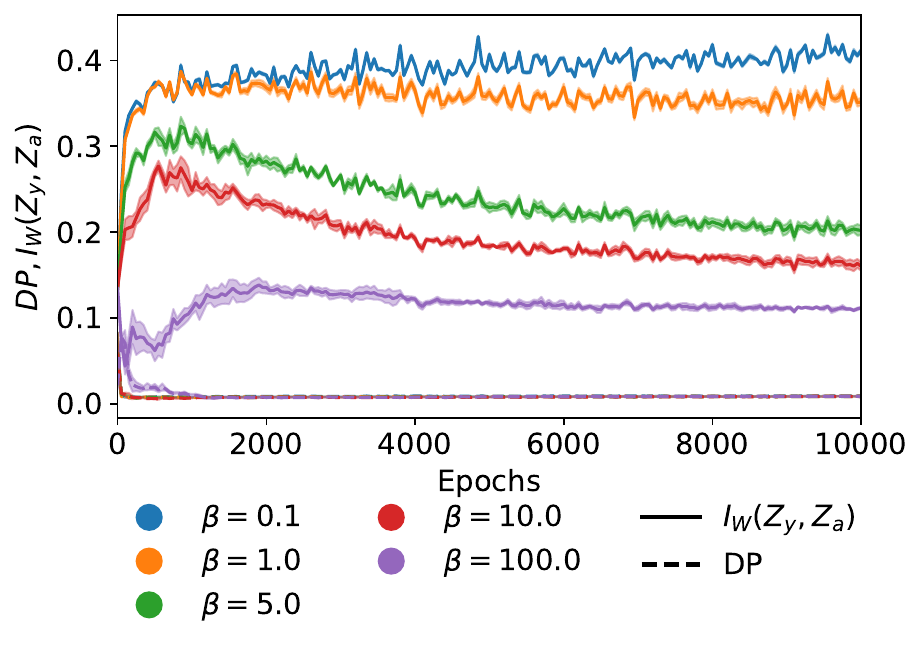}
        \caption{Bios - Demographic Parity}
        \label{fig:wasserstein_beta_fairness_metrics_c}
    \end{subfigure}
    \hfill
    \begin{subfigure}{0.49\textwidth}
        \centering
        \includegraphics[width=\textwidth]{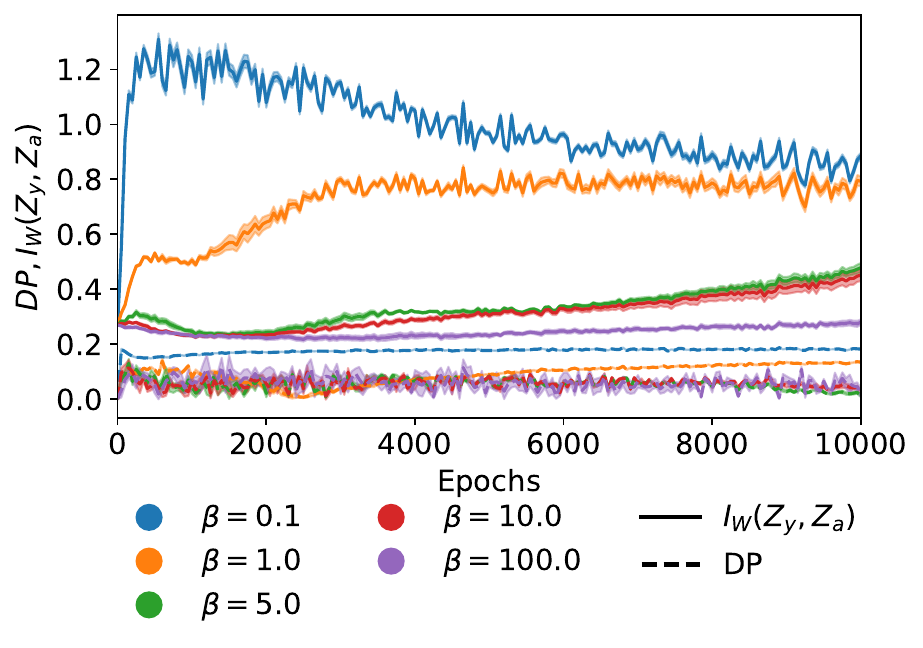}
        \caption{Moji - Demographic Parity}
        \label{fig:wasserstein_beta_fairness_metrics_d}
    \end{subfigure}
    \caption{$I_W(Z_y, Z_a)$ and averaged fairness metrics over classes across training epochs. The values are averaged over 5 runs. }%
    \label{fig:wasserstein_beta_fairness_metrics}%
\end{figure}

\subsubsection{Relation between fairness metrics and the regularization term}

In this section, we empirically show the validity of the bounds from Lemma \ref{thm:boundedgroupfairness} on two data sets, Bios and Moji. \\

\noindent
\customparagraph{Setting} We train the \wfc~ pipeline with different $\beta$ values as in Section \ref{sec:beta_leakage}, and report in Figure \ref{fig:wasserstein_beta_fairness_metrics}, the $I_W(Z_y, Z_a)$ on the training data, and the EO (\ref{fig:wasserstein_beta_fairness_metrics}a,b) and DP (\ref{fig:wasserstein_beta_fairness_metrics}c,d) on the test set for every training epoch. \\

\noindent
\customparagraph{Discussion} We note that the more the loss is constrained by the regularization term, the lower $I_W(Z_y, Z_a)$ is, as well as the fairness metrics. However, after a certain threshold value for $\beta$ ($5.0$ in the experiments), the fairness metrics converge. Finally, while in most cases $I_W(Z_y, Z_a)$ is greater than the considered metrics, as expected from Lemma \ref{thm:boundedgroupfairness} and Theorems \ref{theorem_in_domain_hat_approx} and \ref{theorem_latent_representations}, the contrary happens on a few epochs. 
This discrepancy with the results expected from the theoretical relation arises because we only plot $I_W(Z_y, Z_a)$ rather than the full right-hand term. 

\subsubsection{Use of representations from different layers}
\noindent
\customparagraph{Setting} In the previous experiments, following approaches presented in \citet{han2022fairlib}, the Wasserstein distance is approximated using the last hidden representations of the 3-layer MLP. In this section, we explore the use of representations from different layers of the MLPs. We compare this approach, on both data sets, with the use of the first hidden representations of the MLPs and with the output logits (before argmax), shown in Figure \ref{fig:archi_mlp}. For the latter, the Wasserstein distance is estimated between distributions of different dimensions. For example, for Bios, the \demonic~MLP predicts 2 labels while the classification MLP predicts 28 labels. \\


\begin{figure}[h]
    \centering
    \begin{minipage}{0.55\textwidth}
        \centering
            \resizebox{1.1\textwidth}{!}{
            \begin{tabular}{lcccc}
            \hline
            \textbf{Layer} & \textbf{Bal. Acc. $\uparrow$} & \textbf{Fairness $\uparrow$} & \textbf{DTO $\downarrow$} & \textbf{Leakage $\downarrow$} \\
            \hline
                & & \textbf{Bios} & & \\
                Last hidden & $ \textbf{82.4} \pm \textbf{0.1}^{*} $ & $ \textbf{89.0} \pm \textbf{0.3}^{*}$ & $ \textbf{2.06}^{*} $ & $ 96.5 \pm 0.5$ \\
                First hidden & $81.9 \pm 0.2$  &  $86.7 \pm 0.4$  & $ 4.29 $ & $96.5 \pm 0.6$  \\
                Output layer & $82.1 \pm 0.6$  &  $87.5 \pm 0.3$  & $ 3.49 $ & $\textbf{87.0} \pm \textbf{1.1}^{*}$  \\
                & & \textbf{Moji} & & \\
                Last hidden & $\textbf{75.2} \pm \textbf{0.1}^{*}$  & $\textbf{91.4} \pm \textbf{0.3}^{*}$ &  $\textbf{1.17}^{*}$  &  $86.9 \pm 0.2$  \\
                First hidden & $ 74.3 \pm 0.1 $ & $ 80.8 \pm 1.0 $ & $11.4$ & $ 85.6 \pm 0.6 $ \\
                Output layer & $ 73.5 \pm 0.0 $ & $ 70.2 \pm 0.2 $ & $21.9$ & $ \textbf{64.5} \pm \textbf{0.1}^{*} $ \\  
            \hline
            \end{tabular}}
        \captionof{table}{Comparison between the use of representations of different MLP layers to compute the Wasserstein.}
        \label{tab:diff_layer}
    \end{minipage}
    \hfill
    \begin{minipage}{0.35\textwidth}
        \centering
        \resizebox{\linewidth}{!}{
            \begin{tikzpicture}[
                node distance=1cm and 1cm,
                every node/.style={draw,circle,minimum size=1.2cm, align=center},
                input/.style={fill=blue!5},
                hidden/.style={fill=red!5},
                output/.style={fill=green!5}
            ]
            
                \node[input] (Input) at (0,6) {\textbf{Input} \\ ($\boldsymbol{|Emb|}$)};
                \node[hidden] (Hidden1) at (3,6) {\textbf{First} \\ \textbf{Hidden} \\ \textbf{(300)}};
                \node[hidden] (Hidden2) at (6,6) {\textbf{Last} \\ \textbf{Hidden} \\ \textbf{(300)}};
                \node[output] (Output) at (9,6) {\textbf{Output} \\ ($\boldsymbol{|C|}$)};
            
                \node[input] (I1) at (0,4) {};
                \node[input] (I2) at (0,2.5) {};
                \node [input] at (0,1) {$\vdots$}; 
                \node[input] (I3) at (0,-0.5) {};
                \node[input] (I4) at (0,-2) {};
            
                \node[hidden] (H1) at (3,3.5) {};
                \node[hidden] (H2) at (3,2) {};
                \node [hidden] (H3) at (3,0.5) {$\vdots$}; 
                \node[hidden] (H4) at (3,-1) {};
            
                \node[hidden] (H5) at (6,3.5) {};
                \node[hidden] (H6) at (6,2) {};
                \node [hidden] (H7) at (6,0.5) {$\vdots$}; 
                \node[hidden] (H8) at (6,-1) {};
            
                \node[output] (O1) at (9,3) {};
                \node [output] (O2) at (9,1.5) {$\vdots$}; 
                \node[output] (O3) at (9,0) {};
            
                \foreach \i in {I1,I2,I3,I4} {
                    \foreach \j in {H1,H2,H3,H4} {
                        \draw[-] (\i) -- (\j);
                    }
                }
                
                \foreach \i in {H1,H2,H3,H4} {
                    \foreach \j in {H5,H6,H7,H8} {
                        \draw[-] (\i) -- (\j);
                    }
                }
            
                \foreach \i in {H5,H6,H7,H8} {
                    \foreach \j in {O1,O2,O3} {
                        \draw[-] (\i) -- (\j);
                    }
                }
            
            \end{tikzpicture}
        }
        \caption{Representation of the MLP's layer. 
        }
        \label{fig:archi_mlp}
    \end{minipage}
\end{figure}

\noindent
\customparagraph{Discussion} On both data sets (Table \ref{tab:diff_layer}), accuracy is rather stable regardless of the layers used to compute the Wasserstein distance. Still, the best results are obtained using the last hidden representations. However, while we note a slight decrease in fairness on Bios when using representations from other layers, the decrease becomes much more significant on Moji. Thus, using the last hidden layer is the best strategy.

\subsubsection{Independence with predicted hard sensitive attributes}

\noindent
\customparagraph{Setting} To assess the impact of using the representation $Z_a$, we replace $Z_a$ with the sensitive attributes predicted by the \demonic~MLP, $\hat{A}$. We consider the setting using the embeddings from BERT, with the Bios and Moji data sets. Then, we 
obtain the following regularization term: $I_W(Z_y,\hat{A}) = W_1(p(Z_y,\hat{A}),p(Z_y)p(\hat{A}))$. Note that we do not encounter a problem with the non-differentiability for $\hat{A}$ (with the argmax operation as for $\hat{Y}$ as mentioned in Section \ref{sec:use_latent_rep}) since the \demonic~model is pre-trained. \\

\begin{table}[t]
    \centering
    \begin{tabular}{lcccc}
    \hline
    \textbf{Labels} & \textbf{Bal. Acc. $\uparrow$} & \textbf{Fairness $\uparrow$} & \textbf{DTO $\downarrow$} & \textbf{Leakage $\downarrow$} \\
    \hline
        & & \textbf{Bios} & & \\
        Representations & $ 82.4 \pm 0.1 $ & $ \textbf{89.0} \pm \textbf{0.3}^{*} $ & $ \textbf{2.06}^{*} $ & $ 96.5 \pm 0.5 $ \\
        Hard labels & $ \textbf{82.6} \pm \textbf{0.2} $ & $ 87.5 \pm 0.2 $ & $ 3.28 $ & $ \textbf{92.0} \pm \textbf{0.2}^{*} $ \\
        & & \textbf{Moji} & & \\
        Representations & $\textbf{75.2} \pm \textbf{0.1}^{*}$  & $\textbf{91.4} \pm \textbf{0.3}^{*}$ &  $\textbf{1.17}^{*}$  &  $86.9 \pm 0.2$  \\
        Hard labels &  $72.2 \pm 0.1$  &  $65.0 \pm 0.0$  & $27.3$ &  $\textbf{81.0} \pm \textbf{0.8}^{*}$  \\
    \hline
    \end{tabular}
    \caption{Comparison between the use of representations $Z_a$ and hard sensitive attributes to compute the Wasserstein distance.}
    \label{tab:s_hat}
\end{table}

\noindent
\customparagraph{Discussion} We report the results of this experiment in Table \ref{tab:s_hat}. When we replace $Z_a$ by the predicted $\hat{A}$ to compute the Wasserstein distance, we observe, on average, a slight improvement of the accuracy on Bios, and a slight decrease of the accuracy on Moji. However, while the decrease in Fairness is not significant for Bios, we observe a substantial drop for Moji. As a result, using $\hat{A}$ instead of $Z_a$ seems to have a neutral impact at best; this may also result, in some cases, in a reduction of both accuracy and fairness.

\section{Conclusion}\label{sec:conclusion}

We extend \wfc, a method enforcing fairness constraints using a pre-trained neural network on the sensitive attributes and Wasserstein regularization. We show that minimizing the Wasserstein Dependency Measure ($I_W$) improves fairness by reducing the statistical dependence between predictions and sensitive attributes, linking it to key metrics such as Demographic Parity and Equality of Opportunity.

Instead of directly optimizing $I_W$ between predictions and sensitive attributes, we apply it to the latent representations of two models: one predicting classification labels and the other sensitive attributes. We prove that this formulation provides an upper bound on the dependency measure between predictions and true sensitive attributes while ensuring computational feasibility. Specifically, the $I_W$ between latent representations upper-bounds the $I_W$ between predicted labels and sensitive attributes, which in turn upper-bounds the $I_W$ between predictions and true sensitive attributes. Our method does not require sensitive attribute annotations at both training and inference time. We obtain competitive results on the Bios data set and outperform baselines on fairness metrics while maintaining comparable accuracy on the Moji data set. The approach is also compatible with both encoder-based and decoder-based architectures.

We also extend our method to settings where sensitive attributes are unavailable, leveraging a domain adaptation approach to enable training under this constraint. We provide theoretical guarantees, inspired by domain adaptation results, to assess its generalization to other data sets. \\

\noindent
\customparagraph{Perspectives} Overall, this work could be extended in numerous ways.
The bound presented in Theorem \ref{theorem_cross_domain_hat_approx}  relies on a covariate shift assumption. However, in the literature on causal vs. anticausal learning \citep{ScholkopfJPSZM12}, many NLP tasks can be viewed as anticausal, where the target label (e.g., an occupation) generates the observed text (e.g., the description of the occupation) \citep{JinKNVKSS21}. In such cases, a prior probability shift assumption, where the label distribution changes across domains but the feature distribution conditioned on the label remains invariant, would arguably be a natural setting to consider. When the sensitive attribute itself is considered as the label (e.g., gender), as is the case for our demonic, one could also argue for an anticausal relation, but in a weaker sense: the sensitive attribute influences some parts of the text (e.g., pronouns, names) without fully determining its content. Studying fairness-aware transfer under such partial forms of anticausality would be an interesting direction for future work.

In Theorem 5, we discussed the use of an optimal softmax temperature to obtain a sharper bound. It would be interesting to investigate whether having such a given temperature is an artifact of our proof technique or whether it is a requirement. Furthermore, if it turns out to be necessary, a possible avenue of research would be to draw connections with other research domains that rely on temperature scaling. Calibration is a prime example of such a field where it is a popular baseline that has been shown to perform well in different settings \citep{park2020calibrated,wang2020transferable,chen2023transfer,guo2017calibration}. Hence, it would be interesting to study in which cases the temperature derived in Theorem 5 would positively or negatively impact the level of calibration of the model.

Finally, although we did not explore this direction, the approach could be extended beyond text classification to tasks such as regression or unsupervised learning, or to other types of data such as images.

\section{Limitations}

The proposed approach is flexible and can handle various types of sensitive attributes. However, due to the lack of available data sets, we were unable to evaluate its performance on continuous sensitive attributes, such as age. Additionally, while gender can be represented as an $n$-ary variable, our experiments were limited to a binary classification (men vs. women) due to data availability.


Finally, our experiments demonstrate the effectiveness of our approach in transferring sensitive attributes to improve fairness. However, our theoretical results indicate that the success of this transfer depends on its quality; a poor transfer could, in theory, lead to a decrease in fairness.


\acks{

This work is funded by the French National Research Agency (ANR) in the context of the grant ANR-21-CE23-0026 (Project DIKÉ).
Michael Perrot is supported by the ANR through the grant ANR-23-CE23-0011-01 (Project FaCTor). Charlotte Laclau is supported by the ANR through the grant ANR-23-CE23-0026 (Project ReFAIR) and through the PEPR IA FOUNDRY (ANR-23-PEIA-0003).
Our experiments use the previously mentioned
Fairlib framework. We would like to express our
gratitude to Xudong Han for his availability and
assistance in using it.}


\appendix

\section{Wasserstein Distance}
\label{wasserstein_for_fairness}

Finding correspondences between two sets of points is a longstanding issue in machine learning. The optimal transport (OT) \citep{monge_81} problem offers an efficient solution to this issue by calculating an optimal one-to-one transport map between the two sets, taking into account the geometrical proximity of the points. 
Let $\hat{\mu}_0$ and $\hat{\mu}_1$ be measures supported on the point sets $X_0 = \{x_0^{(i)} \in \mathbb{R}^d\}_{i=1}^{N_0}$ and $X_1 = \{x_1^{(j)} \in \mathbb{R}^d\}_{i=1}^{N_1}$, respectively. We consider the Monge-Kantorovich formulation of the original OT problem \citep{kantorovich}, where the goal is to find a coupling $\gamma$ defined as a joint probability distribution over $X_0 \times X_1$ with marginals $\hat{\mu}_0$ and $\hat{\mu}_1$. This amounts to  minimizing the cost of transport w.r.t. some metric $l_p = \| \cdot\|_p:X_0 \times X_1 \rightarrow \mathbb{R}^+$, the $l_p$-norm.
This problem admits a unique solution $\gamma^*$ and defines a metric on the space of probability measures called the Wasserstein distance (also known as the Earth-Mover Distance) as follows

$$W_1(\hat{\mu}_0, \hat{\mu}_1) = \min_{\gamma \in \Pi(\hat{\mu_0}; \hat{\mu}_1)}\langle M, \gamma\rangle_F,$$
where $\langle \cdot \text{,} \cdot \rangle_F$ is the Frobenius dot product, $M$ is a dissimilarity matrix, i.e., $M_{ij} = l(x_0^{(i)},x_1^{(j)})$, defining the cost of associating $x_0^{(i)}$ with $x_1^{(j)}$ and $\Pi(\hat{\mu}_0, \hat{\mu}_1) = \lbrace \gamma \in \mathbb{R}^{N_0 \times N_1}_+ \vert \gamma \bm{1} = \hat{\mu}_0, \gamma^T \bm{1} = \hat{\mu}_1\rbrace$ is a set of doubly stochastic matrices.


In the following, we will rely on the following technical lemma on the Wasserstein distance between discrete distributions.

\begin{lemma}\label{lem:blocktransportcost}
Let $U\sim p(U)$ and $V \sim p(V)$ be two discrete random variables respectively taking values in $u_1,\ldots,u_k$ and $v_1,\ldots,v_k$. Assume that $\normp{u_i - v_j} = \left\{\begin{array}{cl}
0 & \text{if $i=j$}  \\
\sqrt[p]{2} & \text{otherwise}
\end{array}\right.$, then, we have that
\begin{align*}
    W_1(p(U),p(V)) ={}&  \frac{\sqrt[p]{2}}{2}\sum_{i=1}^k \left|\mathbb{P}(u_i) - \mathbb{P}(v_i)\right|
\end{align*}
\end{lemma}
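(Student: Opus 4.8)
The statement is a clean exact formula for the Wasserstein-1 distance between two discrete distributions that live on index-matched supports with the special ground metric that is $0$ on the diagonal and $\sqrtp{2}$ off it. The plan is to compute the optimal transport plan explicitly by exhibiting a feasible coupling, evaluating its cost, and then certifying optimality via the Kantorovich–Rubinstein dual (Equation~\eqref{eq:kantorovich_rubinstein_equation}), so that the two bounds meet.

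First I would set $p_i = \mathbb{P}(u_i)$ and $q_i = \mathbb{P}(v_i)$, and split the index set into $S_+ = \{i : p_i > q_i\}$ and $S_- = \{i : p_i < q_i\}$ (indices with $p_i = q_i$ are inert). The mass that must actually move is $\sum_{i \in S_+}(p_i - q_i) = \sum_{i \in S_-}(q_i - p_i) = \tfrac12 \sum_i |p_i - q_i| =: m$. The key observation is that because the ground metric takes only the two values $0$ and $\sqrtp{2}$, \emph{any} transport plan that leaves the matched mass $\min(p_i, q_i)$ in place on each coordinate $i$ and ships the remaining surplus mass $m$ (from the coordinates in $S_+$ to those in $S_-$, in any manner) incurs exactly cost $\sqrtp{2} \cdot m$. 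Concretely, build such a coupling $\gamma$: put $\gamma_{ii} = \min(p_i,q_i)$ on the diagonal, and distribute the off-diagonal surplus arbitrarily subject to the marginal constraints; this is feasible and has cost $\sqrtp{2} m = \frac{\sqrtp{2}}{2}\sum_i |p_i - q_i|$. This gives the upper bound $W_1(p(U),p(V)) \le \frac{\sqrtp{2}}{2}\sum_i |p_i - q_i|$.

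For the matching lower bound I would use the dual formulation: $W_1(p(U),p(V)) = \sup_{\|f\|_L \le 1} \mathbb{E}_{p(U)}[f] - \mathbb{E}_{p(V)}[f]$, where $1$-Lipschitz is with respect to the ground metric $\normp{\cdot}$. I would exhibit the witness $f(u_i) = \tfrac{\sqrtp{2}}{2}$ for $i \in S_+$ and $f(v_j) = -\tfrac{\sqrtp{2}}{2}$ for $j \in S_-$ (and any value in $[-\tfrac{\sqrtp{2}}{2},\tfrac{\sqrtp{2}}{2}]$, say $0$, on the remaining points); since the maximal metric distance between distinct points is $\sqrtp{2}$ and the total spread of $f$ is $\sqrtp{2}$, this $f$ is $1$-Lipschitz. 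Its objective value is $\sum_i f(u_i) p_i - \sum_i f(v_i) q_i = \tfrac{\sqrtp{2}}{2}\big(\sum_{i\in S_+}(p_i-q_i) + \sum_{i\in S_-}(q_i - p_i)\big) = \frac{\sqrtp{2}}{2}\sum_i |p_i - q_i|$, matching the upper bound. Combining the two inequalities yields the claimed equality.

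The main obstacle — really the only point needing care — is the Lipschitz verification of the dual witness and, symmetrically, checking feasibility of the primal coupling when several coordinates have $p_i = q_i$ or when $S_+$ and $S_-$ have different cardinalities; both reduce to the remark that the ground metric is a (scaled) discrete metric, so $1$-Lipschitz functions are exactly those whose oscillation is at most $\sqrtp{2}$, and any marginal-respecting nonnegative matrix is an admissible plan. Everything else is bookkeeping. I would also note in passing that the identity is symmetric in $U$ and $V$ as it must be, and that when $U,V$ are supported on one-hot encodings of a label set the hypothesis $\normp{u_i - v_j} \in \{0, \sqrtp{2}\}$ holds automatically for $\normp{\cdot}$ the $p$-norm, which is how the lemma will be invoked in Appendix~\ref{app:group_fairness}.
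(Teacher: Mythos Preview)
Your proof is correct and more self-contained than the paper's, but it takes a genuinely different route. The paper's proof is a two-line application of a known sandwich inequality from \citet[Theorem~4]{gibbs2002}, namely $\min_{u\neq v}\normp{u-v}\cdot TV \le W_1 \le \max_{u,v}\normp{u-v}\cdot TV$, together with the observation that under the hypothesis the minimum and maximum nonzero pairwise distance coincide at $\sqrtp{2}$, so the sandwich collapses to an equality. Your approach instead builds the result from scratch: an explicit primal coupling (keep $\min(p_i,q_i)$ on the diagonal, ship the surplus) for the upper bound and an explicit dual witness $f\in\{\pm\sqrtp{2}/2,0\}$ for the lower bound. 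What your argument buys is independence from the external reference and a constructive picture of the optimal plan; what the paper's argument buys is brevity and an immediate connection to total variation, which is the quantity that actually appears on the right-hand side. One small notational point worth tightening: since $\normp{u_i-v_i}=0$ forces $u_i=v_i$, you might say explicitly that the two supports coincide, so that writing ``$f(u_i)$'' for $i\in S_+$ and ``$f(v_j)$'' for $j\in S_-$ is really defining $f$ on a single $k$-point set; this is what makes your diagonal coupling and your Lipschitz check go through without ambiguity.
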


\begin{proof}
\nolinenumbers
From \citet[Theorem~4]{gibbs2002}, we have that
\begin{align*}
     \min_{u\neq v} \normp{u-v} TV(p(U),p(V)) \leq W_1(p(U), p(V)) \leq \max_{u,v} \normp{u-v} TV(p(U),p(V)),
\end{align*}
where $TV(p(U),p(V)) = \frac{1}{2}\sum_{i=1}^k \left|\mathbb{P}(u_i) - \mathbb{P}(v_i)\right|$ is the total variation. Noticing that, in our case, $\min_{u\neq v} \normp{u-v} = \max_{u,v} \normp{u-v} = \sqrt[p]{2}$ concludes thee proof.

\end{proof}

\begin{lemma}\label{lem:dependencytransportcost}
Let $U\sim p(U)$, $V\sim p(V)$, and $W \sim p(W)$ be discrete random variables taking values in $\mathcal{U}$, $\mathcal{V}$, and $\mathcal{W}$ respectively and such that $\normp{u - u'} = \normp{v - v'} =\normp{w - w'} = \left\{\begin{array}{cl}
0 & \text{if $u=u'$, $v=v'$ or $w=w'$}  \\
\sqrt[p]{2} & \text{otherwise}
\end{array}\right.$, then, we have that
\begin{align*}
    W_1(p(U,W),p(U)p(W))
    ={}& \sum_w W_1(p(U|W=w),p(U)) \mathbb{P}(W=w) \\
    W_1(p(U,W),p(V,W))
    ={}& \sum_w W_1(p(U|W=w),p(V|W=w)) \mathbb{P}(W=w) \\
    W_1(p(U)p(W),p(V)p(W))
    ={}& \sum_w W_1(p(U),p(V)) \mathbb{P}(W=w)
\end{align*}
\end{lemma}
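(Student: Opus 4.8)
The plan is to prove the three identities of Lemma~\ref{lem:dependencytransportcost} by reducing each one to a conditioning argument: the key observation is that under the stated ground metric, the cost of transporting mass between two joint distributions over a product space $\cdot \times \mathcal{W}$ decomposes fiberwise over the values $w$ of $W$, because moving any mass off the ``diagonal'' in the $\mathcal{W}$-component costs $\sqrtp{2}$, which is the maximum possible cost and hence never advantageous. Concretely, I would first establish a general sublemma: if $P$ and $Q$ are two distributions on $\mathcal{S} \times \mathcal{W}$ that share the same $\mathcal{W}$-marginal $p(W)$, and the ground metric is $\normp{(s,w) - (s',w')} = \sqrtp{2}$ whenever $(s,w) \neq (s',w')$ and $0$ otherwise (so in particular any displacement in the $w$-coordinate already incurs the full cost), then $W_1(P,Q) = \sum_w W_1(P(\cdot|W=w), Q(\cdot|W=w))\,\mathbb{P}(W=w)$. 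This is the engine for all three claims.

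To prove the sublemma I would argue by two inequalities. For ``$\leq$'': take optimal couplings $\gamma_w$ for each pair of conditionals $P(\cdot|W=w)$ and $Q(\cdot|W=w)$, and assemble the block-diagonal coupling $\gamma = \sum_w \mathbb{P}(W=w)\,\gamma_w \otimes \delta_{(w,w)}$ (i.e.\ transport mass only within the same $w$-fiber); this is a valid coupling of $P$ and $Q$ since both have $\mathcal{W}$-marginal $p(W)$, and its cost is exactly $\sum_w W_1(P(\cdot|W=w),Q(\cdot|W=w))\mathbb{P}(W=w)$ because the $w$-coordinate never moves. For ``$\geq$'': take any optimal coupling $\gamma$ of $P$ and $Q$ and observe that we may assume it moves no mass across fibers — any unit of mass sent from $(s,w)$ to $(s',w')$ with $w \neq w'$ costs $\sqrtp{2}$, the same as sending it from $(s,w)$ to $(s'',w)$ for any $s''$ with $s'' \neq s$; since the $\mathcal{W}$-marginals agree, one can re-route all cross-fiber mass into within-fiber transport at no greater cost (a standard rearrangement), so WLOG $\gamma$ is block-diagonal, and restricting it to each fiber gives a (sub-optimal in general) coupling of the conditionals, yielding the reverse bound. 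I would want to phrase this carefully, perhaps invoking Lemma~\ref{lem:blocktransportcost} or the Gibbs--Su bound (\citealp{gibbs2002}) it rests on, to avoid re-deriving optimal-transport machinery from scratch.

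Given the sublemma, the three identities follow by specialization. The first, $W_1(p(U,W),p(U)p(W)) = \sum_w W_1(p(U|W=w),p(U))\mathbb{P}(W=w)$, is the sublemma with $P = p(U,W)$, $Q = p(U)p(W)$, $\mathcal{S} = \mathcal{U}$, noting that the $\mathcal{W}$-marginal of the product $p(U)p(W)$ is $p(W)$ and that $Q(\cdot|W=w) = p(U)$ for every $w$. The second is the sublemma with $P = p(U,W)$, $Q = p(V,W)$, $\mathcal{S} = \mathcal{U} \cup \mathcal{V}$ (or simply treat $U,V$ as valued in a common space on which the stated metric holds), both having $\mathcal{W}$-marginal $p(W)$. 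The third, $W_1(p(U)p(W),p(V)p(W)) = \sum_w W_1(p(U),p(V))\mathbb{P}(W=w) = W_1(p(U),p(V))$, is the sublemma where both conditionals are constant in $w$, equal to $p(U)$ and $p(V)$ respectively; alternatively it follows from the first two by direct substitution.

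The main obstacle I anticipate is making the ``re-route cross-fiber mass'' step in the ``$\geq$'' direction fully rigorous rather than hand-wavy: one must check that after rerouting, the resulting coupling still has the correct marginals $P$ and $Q$ on the full product space (not just the right $\mathcal{W}$-marginal), and that the cost does not increase. The cleanest way is probably not to manipulate couplings directly but to use the dual (Kantorovich--Rubinstein) formulation together with the observation that under this ground metric the $1$-Lipschitz test functions are exactly those taking values in an interval of length $\sqrtp{2}$, so that $W_1(P,Q) = \sqrtp{2}\cdot TV(P,Q)$ exactly (not just up to the Gibbs--Su sandwich), and then TV decomposes over the shared conditioning variable by the law of total variation distance — this would let me bypass coupling surgery entirely and reduce everything to an elementary identity about total variation. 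I would pick whichever of these two routes the earlier lemmas (especially Lemma~\ref{lem:blocktransportcost}) make shortest.
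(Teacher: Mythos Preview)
Your approach is essentially the paper's: both arguments observe that the two measures share the same $W$-marginal, build the block-diagonal coupling for the upper bound, and then argue that cross-fiber transport can never beat within-fiber transport because the latter already attains the minimal non-zero cost $\sqrtp{2}$. The paper compresses all of this into one sentence (``the mass that is not transported with a cost of $0$ will be transported with a cost of $\sqrtp{2}$, which is the smallest possible cost different from $0$''), whereas you spell out both inequalities and the rerouting step more carefully --- but the mechanism is identical.

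One correction worth noting: your sublemma hypothesis misstates the product metric that the paper actually uses. With the one-hot encodings of $U$ and $W$ concatenated and measured in $\normp{\cdot}$, the distance between $(u,w)$ and $(u',w')$ is $0$, $\sqrtp{2}$, or $\sqrtp{4}$ according to whether zero, one, or both coordinates differ --- it is \emph{not} uniformly $\sqrtp{2}$ off the diagonal (the paper's proof makes this explicit when it writes the off-diagonal blocks as $\sqrtp{2}\,\mathbb{I} + \sqrtp{4}\,(\mathbbm{1}-\mathbb{I})$). This does not hurt your primary coupling-surgery argument, since cross-fiber moves still cost at least $\sqrtp{2}$, which is all you need. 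It does, however, invalidate your backup claim that $W_1(P,Q) = \sqrtp{2}\cdot TV(P,Q)$ \emph{exactly} on the product space: the Gibbs--Su sandwich there gives only $\sqrtp{2}\cdot TV \leq W_1 \leq \sqrtp{4}\cdot TV$. The TV route is still salvageable: use the Gibbs--Su \emph{lower} bound $W_1(P,Q) \geq \sqrtp{2}\cdot TV(P,Q)$, then $TV(P,Q) = \sum_w TV(P(\cdot\,|\,w),Q(\cdot\,|\,w))\,\mathbb{P}(W=w)$, then Lemma~\ref{lem:blocktransportcost} within each fiber to convert back to $W_1$; the block-diagonal coupling supplies the matching upper bound.
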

\begin{proof}
\nolinenumbers
    The cost matrix associated with $W_1(p(U,W),p(U)p(W))$ is of size $|\mathcal{U}||\mathcal{W}| \times |\mathcal{U}||\mathcal{W}|$. Assuming that we order the pairs $(u,w)$ by varying the values of $u$ first, that is $(u_1,w_1),$ $(u_2,w_1),\ldots$, the cost matrix contains blocks of size $|\mathcal{U}| \times |\mathcal{U}|$. The diagonal blocks have value $\sqrt[p]{2}(\mathbbm{1}_{|\mathcal{U}|\times |\mathcal{U}|}-\mathbb{I}_{|\mathcal{U}|\times |\mathcal{U}|})$ where $\mathbb{I}_{|\mathcal{U}|\times |\mathcal{U}|}$ is the identity matrix of size $|\mathcal{U}| \times |\mathcal{U}|$ and $\mathbbm{1}_{|\mathcal{U}|\times |\mathcal{U}|}$ is a matrix of ones. The off diagonal blocks have value $\sqrt[p]{2}\mathbb{I}_{|\mathcal{U}|\times |\mathcal{U}|} + \sqrt[p]{4}(\mathbbm{1}_{|\mathcal{U}|\times |\mathcal{U}|}-\mathbb{I}_{|\mathcal{U}|\times |\mathcal{U}|})$.

    We have that $\forall w \in \mathcal{W}, \sum_u \mathbb{P}(U=u,W=w) = \mathbb{P}(W=w) = \sum_u \mathbb{P}(U=u)\mathbb{P}(W=w)$ which means that we can consider each diagonal block independently when computing $W_1(p(U,W),p(U)p(W))$, that is compute $\forall w, W_1(p(U|W=w),p(U))$ and then normalize the transport cost by $\mathbb{P}(W=w)$. This will be the optimal cost since the mass that is not transported with a cost of $0$ will be transported with a cost of $\sqrt[p]{2}$, which is the smallest possible cost different from $0$. This concludes the proof of the first equality.
    The proofs of the second and third equality follow using the same arguments.
\end{proof}

\section{Connection with Group Fairness}
\label{app:group_fairness}
The following lemma shows that minimizing the Wasserstein dependency measure is a sound way to improve either demographic parity or equality of opportunity. \\

\noindent
\textbf{Lemma \ref{thm:boundedgroupfairness} (Group fairness and Wasserstein dependency measure.)}
\textit{Let $I_W$ be the Wasserstein dependency measure, and $A$, $Y$, $\hat{Y}$ be random variables corresponding to the sensitive attribute, the true label, and the predicted label, respectively. We have that
\begin{align*}
    I_W(\hat{Y},A) ={}& \frac{\sqrt[p]{2}}{2} \sum_{a \in \mathcal{A}} \mathbb{P}(A=a) \sum_{y\in\mathcal{Y}} \textbf{DP}_{a,y} \;\text{,} \\
    I_W((\hat{Y}=Y)|Y=y,A|Y=y) ={}& \sqrt[p]{2}\sum_{a \in \mathcal{A}} \mathbb{P}(A=a|Y=y) \textbf{EO}_{a,y} \;\text{.}
\end{align*}}
\begin{proof}
\nolinenumbers
Let $\hat{Y}$ and $A$ be the two random variables corresponding to the predicted label and sensitive attribute, respectively. Recall that these random variables are encoded using a one-hot vector, that is $\normp{y_i-y_j} = \left\{\begin{array}{cl}
   0  & \text{if $i=j$} \\
   \sqrt[p]{2}  & \text{otherwise}
\end{array}\right.$ and $\normp{a_i-a_j} = \left\{\begin{array}{cl}
   0  & \text{if $i=j$} \\
   \sqrt[p]{2}  & \text{otherwise}
\end{array}\right.$. Then, by successively applying Lemma~\ref{lem:dependencytransportcost} and Lemma~\ref{lem:blocktransportcost}, we have that
\begin{align*}
    I_W(\hat{Y},A) :={}& W_1(p(\hat{Y}, A), p(\hat{Y})p(A)) \\
    ={}& \sum_{a \in \mathcal{A}} W_1(p(\hat{Y} | A=a), p(\hat{Y}))\mathbb{P}(A=a) \\
    ={}& \sum_{a \in \mathcal{A}} \mathbb{P}(A=a) \frac{\sqrt[p]{2}}{2}\sum_{y\in\mathcal{Y}} \left|\mathbb{P}(\hat{Y}=y|A=a) - \mathbb{P}(\hat{Y}=y)\right| \\
    ={}& \frac{\sqrt[p]{2}}{2} \sum_{a \in \mathcal{A}} \mathbb{P}(A=a) \sum_{y\in\mathcal{Y}} \left|\mathbb{P}(\hat{Y}=y|A=a) - \mathbb{P}(\hat{Y}=y)\right|
\end{align*}
Noticing that $\left|\mathbb{P}(\hat{Y}=y|A=a) - \mathbb{P}(\hat{Y}=y)\right|$ is the demographic parity for group $a$ and label $y$ concludes the proof of the first statement.
Similarly, notice that given a label $y \in \mathcal{Y}$
\begin{small}
\begin{align*}
    I_W((\hat{Y}=Y)|Y=y,A|Y&=y) :={} W_1(p((\hat{Y}=Y), A|Y=y), p((\hat{Y}=Y)|Y=y)p(A|Y=y)) \\
    ={}& \sum_{a \in \mathcal{A}} W_1(p((\hat{Y}=Y) | A=a,Y=y), p((\hat{Y}=Y)|Y=y))\mathbb{P}(A=a|Y=y) \\
    ={}& \sum_{a \in \mathcal{A}} \mathbb{P}(A=a|Y=y) \frac{\sqrt[p]{2}}{2}\left(\left|\mathbb{P}(\hat{Y}=Y|A=a,Y=y) - \mathbb{P}(\hat{Y}=Y|Y=y)\right|\right.\\
    &+ \left. \left|\mathbb{P}(\hat{Y}\neq Y|A=a,Y=y) - \mathbb{P}(\hat{Y}\neq Y|Y=y)\right|\right) \\
    ={}& \sqrt[p]{2}\sum_{a \in \mathcal{A}} \mathbb{P}(A=a|Y=y)\left|\mathbb{P}(\hat{Y}=Y|A=a,Y=y) - \mathbb{P}(\hat{Y}=Y|Y=y)\right|
\end{align*}
\end{small}
Noticing that $\left|\mathbb{P}(\hat{Y}=Y|A=a,Y=y) - \mathbb{P}(\hat{Y}=Y|Y=y)\right|$ is the Equality of opportunity for group $a$ and label $y$ concludes the proof of the second statement.
\end{proof}

\section{Bounding the $I_W(\hat{Y},A)$ by the error rate}
\label{app:bound_prediction}
In this section, we provide the details of the proof of Lemma \ref{lemma_hat_approx} leading to Theorems \ref{theorem_in_domain_hat_approx} and \ref{theorem_cross_domain_hat_approx}. \\

\noindent
\textbf{Lemma \ref{lemma_hat_approx}}
\textit{Let $\hat{Y},\hat{A},A$ be random variables that correspond to the predicted label, predicted sensitive attribute, and true sensitive attribute, respectively. Then, we have that
$$I_W(\hat{Y},A) \leq I_W(\hat{Y},\hat{A}) + 2 \sqrt[p]{2} \Pb(A\neq\hat{A})$$}

\begin{proof}
Let $\hat{Y}, \hat{A}$ and $A$ be the random variables corresponding to the predicted label, predicted sensitive attribute, and true sensitive attribute, respectively. The Wasserstein Dependency Measure \citep{ozair2019wasserstein} is
\begin{align*}
I_W(\hat{Y}, A) = W_1(p(\hat{Y}, A), p(\hat{Y})p(A)) \;\text{.}
\end{align*}

\noindent
The $W_1$-metric can be shown to be a proper metric when the compared distributions have the same overall mass \citep{rubner2000}. Therefore, it satisfies the triangle inequality
\begin{align*}
    I_W(\hat{Y}, A) :={}& W_1(p(\hat{Y}, A), p(\hat{Y})p(A)) \\
    \leq{}& W_1(p(\hat{Y},A), p(\hat{Y},\hat{A})) \\
    &+ W_1(p(\hat{Y}, \hat{A}), p(\hat{Y})p(\hat{A})) \\ 
    &+ W_1(p(\hat{Y})p(\hat{A}), p(\hat{Y})p(A)), 
\end{align*}
with $W_1(p(\hat{Y}, \hat{A}), p(\hat{Y})p(\hat{A})) = I_W(\hat{Y},\hat{A})$.

Recall that $\hat{Y}, \hat{A}$ and $A$ are encoded using a one hot vector, that is $\normp{y_i-y_j} = \left\{\begin{array}{cl}
   0  & \text{if $i=j$} \\
   \sqrt[p]{2}  & \text{otherwise}
\end{array}\right.$ and $\normp{a_i-a_j} = \left\{\begin{array}{cl}
   0  & \text{if $i=j$} \\
   \sqrt[p]{2}  & \text{otherwise}
\end{array}\right.$. Then, by successively applying Lemma~\ref{lem:dependencytransportcost} and Lemma~\ref{lem:blocktransportcost}, we have that
\begin{align}
    W_1(p(\hat{Y}, A), p(\hat{Y},\hat{A})) ={}& \sum_{y \in \mathcal{Y}} W_1(p(A | \hat{Y}=y),p(\hat{A} | \hat{Y}=y))\mathbb{P}(\hat{Y}=y) \nonumber \\
    ={}& \sum_{y \in \mathcal{Y}} \mathbb{P}(\hat{Y}=y)\frac{\sqrt[p]{2}}{2}\sum_{a\in \mathcal{A}} \left|\mathbb{P}(A=a|\hat{Y}=y) - \mathbb{P}(\hat{A}=a|\hat{Y}=y)\right| \label{eq:dev_lem_8}
\end{align}
By the law of total probability and the union bound for disjoint events, we have that
\begin{align*}
    \sum_{a\in \mathcal{A}}& \left|\mathbb{P}(A=a|\hat{Y}=y) - \mathbb{P}(\hat{A}=a|\hat{Y}=y)\right| \\
    ={}& \sum_{a\in \mathcal{A}} \left|\mathbb{P}(A=a, \hat{A} = a|\hat{Y}=y) + \mathbb{P}(A=a, \hat{A}\neq a|\hat{Y}=y)\right.\\
    & \left.- \mathbb{P}(\hat{A}=a, A=a|\hat{Y}=y) -  \mathbb{P}(\hat{A}=a, A\neq a|\hat{Y}=y)\right| \\
    ={}& \sum_{a\in \mathcal{A}} \left|\mathbb{P}(A=a, \hat{A}\neq a|\hat{Y}=y) -  \mathbb{P}(\hat{A}=a, A\neq a|\hat{Y}=y)\right| \\
    \leq{}& \sum_{a\in \mathcal{A}} \mathbb{P}(A=a, \hat{A}\neq a|\hat{Y}=y) +  \mathbb{P}(\hat{A}=a, A\neq a|\hat{Y}=y) \\
    ={}& \sum_{a\in \mathcal{A}} \mathbb{P}(A=a, \hat{A}\neq a|\hat{Y}=y) +  \mathbb{P}(\hat{A}=a, A\neq a|\hat{Y}=y) \\
    ={}& \mathbb{P}(\bigcup_{a\in \mathcal{A}} A=a, \hat{A}\neq a|\hat{Y}=y) +  \mathbb{P}(\bigcup_{a\in \mathcal{A}}\hat{A}=a, A\neq a|\hat{Y}=y) \\
    ={}& 2\mathbb{P}( A\neq \hat{A}|\hat{Y}=y)
\end{align*}
Plugging this result in Equation~\ref{eq:dev_lem_8}, we obtain
\begin{align*}
    W_1(p(\hat{Y}, A), p(\hat{Y},\hat{A})) ={}& \sum_{y \in \mathcal{Y}} \mathbb{P}(\hat{Y}=y)\sqrt[p]{2}\mathbb{P}( A\neq \hat{A}|\hat{Y}=y) \\
    ={}&\sqrt[p]{2}\mathbb{P}( A\neq \hat{A})
\end{align*}

Using similar arguments, we obtain that
\begin{align*}
    W_1(p(\hat{Y})p(A), p(\hat{Y})p(\hat{A})) ={}& \sum_{y \in \mathcal{Y}} \mathbb{P}(\hat{Y}=y)\frac{\sqrt[p]{2}}{2}\sum_{a\in \mathcal{A}} \left|\mathbb{P}(A=a) - \mathbb{P}(\hat{A}=a)\right| \\
    ={}&\sqrt[p]{2}\mathbb{P}( A\neq \hat{A}).
\end{align*}
This concludes the proof of this lemma.

\end{proof}
Built on this first result, we consider two scenarios to bound the error rate: either we pre-trained the \demonic~model on the data of the classification task (\textbf{in-domain}) or as a \da~ problem, on different data with shared sensitive attributes (e.g., gender, ethnicity, etc.) (\textbf{cross-domain)}.

\subsection{In-domain bound of the error rate for binary sensitive attributes}
\label{app:bound_in_domain}
\textbf{Theorem \ref{theorem_in_domain_hat_approx}}
\textit{
Let $\hat{A}, A \in \{0, 1\}$, and $\mathcal{H}$ be a hypothesis space of $VC$-dimension $d$. Let $\normp{\cdot}$ be the ground metric for the Wasserstein 1-distance. Assume that we have access to a training set of $m$ i.i.d. examples. Then, with probability at least $1-\delta$, we have $\forall h \in \mathcal{H}$
\begin{align*}
    I_W(\hat{Y}, A) \leq I_W(\hat{Y}, \hat{A}) + 2 \sqrt[p]{2} \left(\hat{\varepsilon} + \sqrt{\frac{4}{m} \left(dlog\frac{2em}{d}+log\frac{4}{\delta}\right)}\right)
\end{align*}
with $e$, the base of the natural logarithm and $\hat{\varepsilon}$ the empirical risk of the demonic model. 
} \\

\begin{proof}
From Lemma \ref{lemma_hat_approx}, we derive the following
\begin{align*}
        I_W(\hat{Y}, A) &\leq I_W(\hat{Y}, \hat{A}) + 2 \sqrt[p]{2}\Pb(A\neq\hat{A}) \\
        &= I_W(\hat{Y}, \hat{A}) + 2 \sqrt[p]{2} \varepsilon \\
\end{align*}
We apply the Vapnik-Chervonenkis theory \citep{vapnik1998statistical} to bound the true error $\varepsilon$ of the \demonic~model by its empirical risk $\hat{\varepsilon}$.
Let  $h_a$ be a fixed classification function from $Z_a$ to $A$ and $\mathcal{H}$ be a hypothesis space of
$VC$-dimension $d$. 
Therefore, if the training set is of size $m$ .i.i.d. samples, with probability at least $1-\delta$, we have for every $h \in \mathcal{H}$
\begin{align*}
    I_W(\hat{Y}, A) \leq I_W(\hat{Y}, \hat{A}) + 2 \sqrt[p]{2}\left(\hat{\varepsilon} + \sqrt{\frac{4}{m} \left(dlog\frac{2em}{d}+log\frac{4}{\delta}\right)}\right)
\end{align*}
and $e$ is the base of the natural logarithm.

\end{proof}

\section{Bounding $I_W(\hat{Y},\hat{A})$ by $I_W(Z_y,Z_a)$}
\label{app:bound_rep}

In this section, we present the proof for Theorem \ref{theorem_latent_representations} recalled below. \\

\noindent
\textbf{Theorem \ref{theorem_latent_representations}}
\textit{Let $\hat{Y},\hat{A}$ be random variables that correspond to the predicted label and predicted sensitive attribute, respectively. Assume that $h_y = \sigma_{\lambda}(f(Z_y))$ and $h_a = \sigma_{\lambda}(g(Z_a))$ where $\sigma_{\lambda}$ is the softmax function with temperature $\lambda$, $f$ and $g$ are both $L$-lipschitz with respect to the $p$-norm, and $Z_y$ and $Z_a$ are latent representations of the examples. Let $\normp{\cdot}$ be the ground metric for the Wasserstein 1-distance. For a given example $x$ with predicted label $\hat{y}$ and predicted sensitive attribute $\hat{a}$, let $\xi_y(x) = f(Z_y)_{\hat{y}} - \max_{y' \neq \hat{y}}f(Z_y)_{y'}$ and $\xi_a(x) = g(Z_a)_{\hat{a}} - \max_{a' \neq \hat{a}}g(Z_a)_{a'}$ be positive margins. Let $\delta = 1-\mathbb{P}(\xi_y(X)\geq \xi, \xi_a(X)\geq \xi)$ with $\xi > 0$. Let $\alpha=\sqrtp{2} \normp{\binom{|\mathcal{Y}|}{|\mathcal{A}|} - 1}(1-\delta)$ and $\iota=L(\abs{\mathcal{Y}}+\abs{\mathcal{A}})^{\abs{\frac{1}{2}-\frac{1}{p}}}$. Then, setting $\lambda = \frac{1}{\xi}\log\left(\frac{2\xi\alpha}{\iota I_W(Z_y, Z_a)}\right)$, we have that
\begin{align*}
        I_W(\hat{Y},\hat{A}) \leq{}& \min\left( \alpha, 2I_W(Z_y, Z_a)\frac{\iota}{\xi} \left[1+\log \left( \max\left(4, \frac{2\xi\alpha}{\iota I_W(Z_y, Z_a)}\right) - 1\right)\right] \right)\\
        &+ \sqrtp{2} \normp{\binom{|\mathcal{Y}|}{|\mathcal{A}|}-1} \delta.
\end{align*}
}

\begin{proof}
    Since, in our case, the Wasserstein distance is a proper metric, we have that
    \begin{align}
        I_W(\hat{Y},\hat{A}) ={}& W_1(p(\hat{Y}, \hat{A}), p(\hat{Y})p(\hat{A})), \nonumber \\
        \leq{}& W_1(p(\hat{Y}, \hat{A}), p(\sigma_\lambda(f(Z_y)), \sigma_\lambda(g(Z_a)))) \nonumber \\
        &+ W_1(p(\sigma_\lambda(f(Z_y)), \sigma_\lambda(g(Z_a))), p(\sigma_\lambda(f(Z_y)))p(\sigma_\lambda(g(Z_a)))) \nonumber \\
        &+ W_1(p(\sigma_\lambda(f(Z_y)))p(\sigma_\lambda(g(Z_a))), p(\hat{Y})p(\hat{A})). \label{eq:three_terms_bound}
    \end{align}
    We will first bound each term independently and then show that we can choose the softmax temperature, $\lambda$, in order to minimize the right-hand side of the bound. \\

    \noindent
    \customparagraph{Bounding $W_1(p(\sigma_\lambda(f(Z_y)), \sigma_\lambda(g(Z_a))), p(\sigma_\lambda(f(Z_y)))p(\sigma_\lambda(g(Z_a))))$.} Given $\gamma \in \Gamma$ a coupling between the two distributions, the second term can be bounded as
    \begin{align*}
        W_1(p(\sigma_\lambda(f(Z_y)),& \sigma_\lambda(g(Z_a))), p(\sigma_\lambda(f(Z_y)))p(\sigma_\lambda(g(Z_a)))), \\
        =& \inf_{\gamma} \mathbb{E}_{(z_y,z_a,z'_y,z'_a) \sim \gamma} \normp{(\sigma_\lambda(f(z_y)),\sigma_\lambda(g(z_a))) - (\sigma_\lambda(f(z'_y)),\sigma_\lambda(g(z'_a)))}, \\
        \leq{}& \inf_{\gamma} \mathbb{E}_{(z_y,z_a,z'_y,z'_a) \sim \gamma} L\lambda(\abs{\mathcal{Y}}+\abs{\mathcal{A}})^{\abs{\frac{1}{2}-\frac{1}{p}}}\normp{(z_y,z_a) - (z'_y,z'_a)}, \\
        \leq{}& L\lambda(\abs{\mathcal{Y}}+\abs{\mathcal{A}})^{\abs{\frac{1}{2}-\frac{1}{p}}} W_1(p(Z_y, Z_a), p(Z_y)p(Z_a)), \\
        ={}& L\lambda(\abs{\mathcal{Y}}+\abs{\mathcal{A}})^{\abs{\frac{1}{2}-\frac{1}{p}}} I_W(Z_y,Z_a).
    \end{align*}
    where the first inequality comes from the $\lambda$-lipschitzness of the softmax function $\ell_2$-norm \citep{gao2017properties} and equivalence of norms properties. \\

    \noindent
    \customparagraph{Bounding $W_1(p(\hat{Y}, \hat{A}), p(\sigma_\lambda(f(Z_y)), \sigma_\lambda(g(Z_a))))$ and $W_1(p(\sigma_\lambda(f(Z_y)))p(\sigma_\lambda(g(Z_a))), p(\hat{Y})p(\hat{A}))$.}
    Let the softmax function $\sigma_{\lambda}(f(z)) = \frac{e^{\lambda f(z)}}{ \left\|e^{\lambda f(z)}\right\|_1}$ for $z$ a vector representation of an example $x$ and $\lambda \geq 0$ the temperature. Then, we have that
    \begin{align*}
    W_1(p(\hat{Y}, \hat{A}), p(\sigma_{\lambda}(f(Z_y)), \sigma_{\lambda}(g(Z_a))) 
        ={}& W_1(p(\hat{Y})p(\hat{A}), p(\sigma_{\lambda}(f(Z_y)))p(\sigma_{\lambda}(f(Z_a))) \\
        ={}& \mathop{\mathbb{E}}c(X, X).
    \end{align*}
    Indeed, for an example $x$ represented as $z_y$, $z_a$ and with predictions $\hat{y}$ and $\hat{a}$ and an example $x'$ represented as $z'_y$, $z'_a$ and with predictions $\hat{y}$ and $\hat{a}$ the cost matrix $c$ is such that:
    $c(x,x') = \left\|(\hat{y},\hat{a})^\top - \left(\frac{e^{\lambda f(z'_y)}}{\|e^{\lambda f(z'_y)}\|_1}, \frac{e^{\lambda f(z'_a)}}{\|e^{\lambda f(z'_a)}\|_1}\right)^\top \right\|_p$. Thus, the minimal cost is achieved when each example is mapped onto itself since the predictions are obtained by taking the labels and sensitive attributes predicted as being most likely.
    We then have that \\
    
    \resizebox{0.96\textwidth}{!}{$
    \begin{aligned}
        c(x,x) &= \left[\left(1 - \frac{e^{\lambda f(z_y)_{\hat{y}}}}{\|e^{\lambda f(z_y)}\|_1}\right)^p
        + \left(\frac{\sum\limits_{y' \neq \hat{y}}e^{\lambda f(z_y)_{y'}}}{\|e^{\lambda f(z_y)}\|_1}\right)^p
        + \left(1 - \frac{e^{\lambda g(z_a)_{\hat{a}}}}{\|e^{\lambda g(z_a)}\|_1}\right)^p
        + \left(\frac{\sum\limits_{a' \neq \hat{a}}e^{\lambda g(z_a)_{a'}}}{\|e^{\lambda g(z_a)}\|_1}\right)^p\right]^{\frac{1}{p}}, \\
        &= \left[ 2~ \left(\frac{\sum\limits_{y' \neq \hat{y}}e^{\lambda f(z_y)_{y'}}}{\|e^{\lambda f(z_y)}\|_1}\right)^p 
        + 2~ \left(\frac{\sum\limits_{a' \neq \hat{a}}e^{\lambda g(z_a)_{a'}}}{\|e^{\lambda g(z_a)}\|_1}\right)^p\right]^{\frac{1}{p}}.
    \end{aligned}
    $} \\
    
    For a given example $x$ with predicted label $\hat{y}$ and predicted sensitive attribute $\hat{a}$, let $\xi_y(x) = f(z_y)_{\hat{y}} - \max_{y' \neq \hat{y}}f(z_y)_{y'}$ and $\xi_a(x) = g(z_a)_{\hat{a}} - \max_{a' \neq \hat{a}}g(z_a)_{a'}$ be positive margins. We note $m_y = f(z_y)_{\hat{y}}$ and $m_a = f(z_a)_{\hat{a}}$.
    Then, we have that
    \begin{align*}
        c(x,x) & \leq \left[2 \left(\frac{(|\mathcal{Y}|-1)e^{\lambda(m_y-\xi_y)}}{e^{\lambda m_y + e^{\lambda(m_y-\xi_y)}}}\right)^p
        + 2 \left(\frac{(|\mathcal{A}|-1)e^{\lambda(m_a-\xi_a)}}{e^{\lambda m_a + e^{\lambda(m_a-\xi_a)}}}\right)^p\right]^{\frac{1}{p}}, \\
        & \leq \left[2 \left(\frac{(|\mathcal{Y}|-1)e^{\lambda m_y}}{e^{\lambda \xi_y}e^{\lambda m_y} + e^{\lambda m_y}}\right)^p
       + 2 \left(\frac{(|\mathcal{A}|-1)e^{\lambda m_a}}{e^{\lambda \xi_a}e^{\lambda m_a} + e^{\lambda m_a}}\right)^p\right]^{\frac{1}{p}}, \\
       & \leq \left[2 \left(\frac{|\mathcal{Y}| - 1}{e^{\lambda \xi_y} + 1}\right)^p + 2 \left(\frac{|\mathcal{A}| - 1}{e^{\lambda \xi_a} + 1}\right)^p\right]^{\frac{1}{p}}.
    \end{align*}

    Let $\delta = 1-\mathbb{P}(\xi_y(X)\geq \xi, \xi_a(X)\geq \xi)$ with $\xi > 0$, then we have that
    \begin{align*}
        \mathop{\mathbb{E}} c(X, X) ={}& \mathop{\mathbb{E}}\left[ c(X, X) | \xi_y(x)\geq \xi, \xi_a(x)\geq \xi \right](1-\delta) + \mathop{\mathbb{E}}\left[ c(X, X) | \overline{\xi_y(x)\geq \xi, \xi_a(x)\geq \xi} \right]\delta, \\
        \leq{}& \left[2 \frac{\left(|\mathcal{Y}| - 1\right)^p + \left(|\mathcal{A}| - 1\right)^p}{\left(e^{\lambda \xi} + 1 \right)^p} \right]^{\frac{1}{p}} (1-\delta) + \left[2 \frac{\left(|\mathcal{Y}| - 1\right)^p + \left(|\mathcal{A}| - 1\right)^p}{\left(2\right)^p} \right]^{\frac{1}{p}} \delta,  \\
        \leq{}& \sqrtp{2} \frac{\normp{\binom{|\mathcal{Y}|}{|\mathcal{A}|}-1}}{e^{\lambda \xi} + 1} (1-\delta) + \frac{\sqrtp{2}}{2} \normp{\binom{|\mathcal{Y}|}{|\mathcal{A}|}-1} \delta. 
    \end{align*} \\

    \noindent
    \customparagraph{Optimizing the softmax temperature.} Our goal is to minimize the right hand side of Equation~\eqref{eq:three_terms_bound}, we need to solve
    \begin{align*}
        \arg\inf\limits_\lambda \frac{2\sqrtp{2} \left\| \binom{|\mathcal{Y}|}{|\mathcal{A}|} - 1 \right\|_p}{e^{\lambda \xi} + 1}(1-\delta) + \lambda L (\abs{\mathcal{Y}}+\abs{\mathcal{A}})^{\abs{\frac{1}{2}-\frac{1}{p}}}I_W(Z_y, Z_a).
    \end{align*}
    Let $\alpha = 
    \sqrtp{2} \normp{\binom{|\mathcal{Y}|}{|\mathcal{A}|} - 1}(1-\delta)$ and $\beta = L(\abs{\mathcal{Y}}+\abs{\mathcal{A}})^{\abs{\frac{1}{2}-\frac{1}{p}}}I_W(Z_y, Z_a)$
    which are both positive values, then we consider
    \begin{align*}
        \arg\inf\limits_\lambda  \frac{2 \alpha}{e^{\lambda \xi} + 1} + \lambda \beta.
    \end{align*}
    Let $\gamma = \lambda \xi$, since $\xi > 0$ and $\alpha > 0$ then,
    \begin{align*}
        \arg\inf\limits_\lambda \frac{2\alpha}{e^{\gamma} + 1} + \lambda \beta = \frac{1}{\xi} \arg\inf\limits_\gamma \frac{1}{e^{\gamma} + 1} + \gamma \frac{\beta}{2\xi\alpha}.
    \end{align*}
     Let $c = \frac{\beta}{2\xi\alpha} \geq 0$ by definition, then we solve
    \begin{align*}
        \arg\inf\limits_{\gamma} \frac{1}{e^{\gamma} + 1} + c \gamma.
    \end{align*}
    We can study this function by looking at the sign of its derivative. 
    Considering the derivative equal to 0, we have
    \begin{align*}
        & c - \frac{e^{\gamma}}{(e^{\gamma} + 1)^2} = 0 \\
        \Leftrightarrow~ & c(e^{\gamma} + 1)^2 - e^{\gamma} = 0 \\
        \Leftrightarrow~ & ce^{2\gamma} + 2ce^{\gamma} + c - e^{\gamma} = 0 \\
        \Leftrightarrow~ & ce^{2\gamma} + (2c -1)e^{\gamma} + c = 0.
    \end{align*}
    With a change of variables $x=e^{\gamma}$, we solve
    \begin{align*}
        cx^2 + (2c -1)x + c = 0,
    \end{align*}
    and obtain the following root $\Delta = (2c-1)^2 - 4c^2 = 1-4c$. In the following, we consider two cases:
    
    \begin{itemize}
        \item Let $c\geq\frac{1}{4}$, then $\Delta \leq 0$ and there no or a single root. Since $c\geq0$, the gradient is always positive which implies that the minimum is reached at $\gamma = 0$ which is $\lambda=0$. Therefore, in this case, the bound is equal to 
        $\alpha = \sqrtp{2}\|\binom{|\mathcal{Y}|}{|\mathcal{A}|} - 1\|_p$.

        \item Let $c < \frac{1}{4}$, then $\Delta > 0$ and we have $x = \frac{1 - 2c \pm \sqrt{1-4c}}{2c}$. Since, $x=e^{\gamma}$ and $\gamma \geq 0$, then $x\geq1$. \\

        If $x = \frac{1 - 2c - \sqrt{1-4c}}{2c}$ and $x\geq1$, then $1-4c \geq \sqrt{1-4c}$ which is impossible since $c<\frac{1}{4}$.

        It implies that $x = \frac{1 - 2c + \sqrt{1 - 4c}}{2c}$ which is $\lambda = \frac{1}{\xi} \log(\frac{1 - 2c + \sqrt{1 - 4c}}{2c})$. Then, we have


        $$\lambda = \frac{1}{\xi} \log\left(\frac{1 - 2c + \sqrt{1 - 4c}}{2c}\right) 
            = \frac{1}{\xi} \log\left(\frac{1}{2c}\left(1+\sqrt{1-4c}\right) - 1 \right). $$

        Since we have an increasing function for $\lambda' \geq \lambda$ and $\sqrt{1-4c} \leq 1$, we can consider

        $$ \lambda \leq \lambda' = \frac{1}{\xi} \log \left( \frac{1}{c} -1 \right). $$ 

        In this case, the bound becomes
        \begin{align*}
            \frac{2 \alpha}{e^{\frac{1}{\xi}\log\left(\frac{1}{c} - 1\right)\xi}} + \frac{1}{\xi} \log\left(\frac{1}{c} - 1\right) \beta
            & = 2 \alpha \frac{1}{\frac{2\xi\alpha}{\beta}} + \frac{1}{\xi} \log \left(\frac{2\xi\alpha}{\beta} -1\right) \beta, \\
            &= \frac{\beta}{\xi} \left[1+\log \left( \frac{2\xi\alpha}{\beta} - 1\right)\right]. 
        \end{align*}







        Thus, we obtain the following bound
        \begin{align*}
            \min \left(\alpha, \frac{\beta}{\xi} \left[1+\log \left( \max\left(4, \frac{2\xi\alpha}{\beta}\right) - 1\right)\right]   \right),
        \end{align*}
        where the left term of the minimization corresponds to the bound when $\frac{2\xi\alpha}{\beta} \leq 4$, otherwise the bound is equal to the right term.
    \end{itemize}
\end{proof}

\section{Details of the bounds handling under different sensitive attribute scenarios.}
\label{app:scenarios_theorems_sa}

We summarize the different scenarios of sensitive attributes and how adaptable our theoretical framework is in Table \ref{tab:sa_scenarios_bound}.

\begin{table}[h]
    \centering
    \resizebox{\textwidth}{!}{%
    \begin{tabular}{l|cccc}
      \hline
      Type of SA & Lemma 1 \& 2 & Th. 3 & Th. 4 & 
      Empirically\\ \hline
      $A \in \{0, 1\}$ & \checkmark & \checkmark & \checkmark & \checkmark  \\ \cdashline{2-5}
      $A \in \{0,\dots,K\}$ & \checkmark & 
      \begin{tabular}{c}
           \checkmark (with Natarajan \\dimension)
      \end{tabular}
      & \begin{tabular}{p{40mm}}
        \checkmark (with generalization of the $\mathcal{H}\Delta\mathcal{H}$-divergence (Sicila et al., 2022))  
      \end{tabular} & \checkmark \\ \cdashline{2-5}
      Multiple SA  & \multicolumn{4}{c}{\checkmark (considering the intersection of Y and A so that $A \in \{0,\dots,K\}$)} \\ \cdashline{2-5}
      $A \in \mathbb
      R$ & \begin{tabular}{c} \checkmark (with \\ binning) \end{tabular} & \begin{tabular}{c} \checkmark (with \\ binning) \end{tabular} & \begin{tabular}{c} \checkmark (with \\ binning) \end{tabular} & \begin{tabular}{c} \checkmark (with \\ regression) \end{tabular} \\
      \hline
    \end{tabular}}
    \caption{Summary of the different possible scenarios for the sensitive attributes (SA)}
    \label{tab:sa_scenarios_bound}
\end{table}

\section{Experimental details}
\label{app:experimental_details}
\subsection{\wfc~ algorithm}
\label{app:algorithm}
In this section, we describe the full algorithm of \wfc. Algorithm~\ref{alg:us} provides the detailed algorithm for \wfc\ used in our experiments.

\begin{algorithm*}[h]
\caption{\wfc\ Algorithm}\label{alg:us}
\KwData{$D = \{(x_i,y_i,a_i)\}_{i=1}^n$ the training set, $n_e$ the number of epochs, $n_c$ and $n_d$ the number of training iterations per epoch for the critic and the classifier respectively, a batch size $n_b$, two neural networks $h_a(Enc(x))$ and $h_y(Enc(x);\theta)$, 
 a Critic $C_{\omega}$, a weight on the regularization $\beta$}
\For{e = 1, ..., $n_e$}{
    \For{t = 1, ..., $n_c$}{
        Sample $\{x_i,y_i,a_i\}_{i = 1}^{n_b}$
        
        Encode :  $z_{a} \leftarrow \{h_a(Enc(x_i))\}_{i=1}^{n_b}$, $z_{y} \leftarrow \{h_y(Enc(x_i))\}_{i=1}^{n_b}$
        
        Concatenate vectors to get $Z_{dep} \leftarrow [z_{a,i},z_{y,i}]_{i = 1}^{n_b}$
        
        Shuffle the $z_{a,i}$ vectors.
        
        Concatenate vectors to get $Z_{ind} \leftarrow [z_{s,i},z_{y,i}]_{i = 1}^{n_b}$
        
        $grad(w) \leftarrow \nabla_{\omega} \frac{1}{n_b}(\sum_{i=1}^{n_b} C_{\omega}(Z_{dep,i}) - \sum_{i=1}^{n_b} C_{\omega}(Z_{ind,i}))$
        
        $\omega \leftarrow Adam(\omega;grad(w))$
    }
    
    \For{t = 1, ..., $n_d$}{
        Sample $\{x_i,y_i,a_i\}_{i = 1}^{n_b}$
    
        Encode : $z_{s} \leftarrow \{h_a(x_i)\}_{i=1}^{n_b}$, $z_{y} \leftarrow \{h_y(x_i)\}_{i=1}^{n_b}$
        
        Concatenate vectors to get $Z_{dep} = [z_{a,i},z_{y,i}]_{i = 1}^{n_b}$
        
        Shuffle the $z_{a,i}$ vectors.
        
        Concatenate vectors to get $Z_{ind} = [z_{a,i},z_{y,i}]_{i = 1}^{n_b}$
        
        $\mathcal{L} \leftarrow \sum_{i = 1}^{n_b} \mathcal{L}(y_i,h_y(Enc(x_{y,i})))$
        
        $\mathcal{L} \leftarrow \mathcal{L}  + \beta ( \sum_{i=1}^{n_b} C_{\omega}(Z_{dep,i}) - \sum_{i=1}^{n_b} C_{\omega}(Z_{ind,i}))$
        
        $\theta \leftarrow Adam(\theta;\nabla_{\theta} \frac{1}{{n_b}} \mathcal{L})$
    }
}
\end{algorithm*}

\subsection{Details when using BERT-encoder}
\label{app:archi_t1}

In this section, we provide additional experimental details, notably, we detail the architectures of the MLPs and give the optimal hyperparameters when BERT model is used to obtain the initial representations.

\subsubsection{MLP architecture}

In Table~\ref{fig:mlp}, we present the architectural details of the classifier MLP. We grid searched over the learning rate ($lr \in \{1e^{-5}, 1e^{-4}, 1e^{-3}, 5e^{-5}, 5e^{-4}, 5e^{-3}\}$, the number of training batches for classification per epoch $n_d \in \{5, 10, 20\}$, the value used to clip the weights to enforce the Lipschitz constraint $clipping~value \in \{0.001, 0.01, 0.1\}$, the parameter $\beta \in \{0.1, 0.5, 1, 2, 5, 10, 100\}$, the layer used between the \textit{first hidden}, \textit{last hidden}, or \textit{last} layer.
 
\begin{table}[h]
\begin{subtable}[h]{0.45\textwidth}
   \centering
\begin{tabular}{rcc}
\hline
          Data set & Bios & Moji    \\
                     \hline
input dimension & 768        & 2304    \\
hidden layers        & 2          & 2      \\
hidden dimension     & 300        & 300     \\
learning rate        & $1^{-4}$      & $1^{-5}$ \\
batch size           & 128        & 128     \\
epochs max           & 10000      & 10000   \\
activation           & TanH       & TanH    \\
$\beta$               & 1          & 1       \\
$n_c$                & 20          & 5       \\
$n_d$                 & 5          & 5      \\
clipping value          & 0.01       & 0.01    \\
layer used           & last      & last   \\ \hline
\end{tabular} 
\caption{Details on hyperparameters used for the classifying MLP.}
\label{fig:mlp} 
\end{subtable} 
\begin{subtable}[h]{0.45\textwidth}
    \centering
\begin{tabular}{rc}
\hline
Hyperparameter & Value \\
\hline
number hidden layer & 1 \\
hidden dimension & 512 \\
activation & ReLU \\
optimizer & \begin{tabular}{c}
     Root Mean Square \\ Propagation    
\end{tabular}  \\
learning rate & $5e^{-5}$ \\ \hline
\end{tabular}
\caption{Details on hyperparameters used for the Critic MLP.}
\label{fig:mlpc}
\end{subtable}
\caption{Hyperparameter details when using BERT-encoder.}
\end{table}

\subsubsection{Critic architecture}
 In Table~\ref{fig:mlpc}, we present the architectural details of the Critic, which is a simple multi-layer perceptron. We grid searched over the learning rate $lr \in \{5e^{-5},~5e^{-4},~5e^{-3}\}$.
 
\begin{table}[h]

\end{table}

\subsection{Details when using SFR-Embeddings-2\_R}
\label{app:details_task_4}
\subsubsection{MLP architecture}

In Table~\ref{fig:mlp_sfr}, we present the architectural details of the classifier MLP when the embeddings are produced by the SFR-Embeddings-2\_R. We grid searched over the learning rate ($lr \in \{3e^{-7},~3e^{-6},~3e^{-5},~3e^{-3},~3e^{-1}\}$, the number of training batches for classification per epoch and for the Critic training $n_d,~n_c \in \{5, 10, 20\}$, and the hidden layer dimension (100, 300, 900).

\begin{table}[h]
\begin{subtable}[h]{0.45\textwidth}
\centering
\begin{tabular}{rc}
\hline
Hyperparameter & Value \\
\hline
input dimension & 4096  \\
hidden layers        & 1      \\
hidden dimension     & 300      \\
learning rate        & $3e^{-5}$   \\
batch size           & 128       \\
epochs max           & 10000     \\
activation           & TanH   \\
$\beta$               & 1  \\
$n_c$                & 20  \\
$n_d$                 & 10 \\
clipping value          & 0.01 \\
layer used           & last \\
\hline
\end{tabular}
\caption{Details on hyperparameters used for the classifying MLP.}
\label{fig:mlp_sfr}
\end{subtable}
\begin{subtable}[h]{0.45\textwidth}
\centering
\begin{tabular}{rc}
\hline
Hyperparameter & Value \\
\hline
number hidden layer & 1 \\
hidden dimension & 512 \\
activation & ReLU \\
optimizer & \begin{tabular}{c}
     Root Mean Square \\ Propagation    
\end{tabular}  \\
learning rate & $3e^{-6}$ \\ \hline
\end{tabular}
\caption{Details on hyperparameters used for the Critic MLP.}
\label{fig:mlpc_sfr}
\end{subtable}
\caption{Hyperparameter details for SFR-Embeddings-2\_R.}
\end{table}

\subsubsection{Critic architecture}
In Table~\ref{fig:mlpc_sfr}, we present the architectural details of the Critic for the task using SFR-Embeddings-2\_R. We grid searched over the learning rate $lr \in \{3e^{-7},~3e^{-6},~3e^{-5},~3e^{-3},$ $~3e^{-1}\}$.

\subsubsection{Baselines hyperparameters}
We select the best hyperparameters for the baselines for the classification of the representations generated by the SFR-Embedding-2\_R model. Following \citet{shen2022does}, we first determine the optimal hyperparameters of the classification models and keep those hyperparameters fixed when searching for the method-specific best hyperparameters. 
We tune the learning rate (lr $\in~\{3e^{-1},~3e^{-2},~3e^{-3},~\boldsymbol{3e^{-4}},~3e^{-5}\}$ and the hidden dimension ($\in\{100,~\boldsymbol{300},~900\}$). For the ADV baseline, we take 3 adversaries and consider several values for the following hyperparameters 
\textit{adv\_diverse\_lambda} $\in \{1e^{-1},~1e^{-2},~\boldsymbol{1e^{-3}},~1e^{-4}\}$ and \textit{adv\_lambda} $\in \{0.3,~\boldsymbol{0.5},~1,~2\}$. Values in bold are the selected ones. When BTEO is used the hyperparameters are set to 'EO' for \textit{BTObj}, 'Resampling' for \textit{BT} as in \citep{shen2022does}.
Finally, the embedding size is 4096, the batch size is 1024, and we set a patience of 10 for the early stopping.

\subsubsection{Details for Cross-domain \wfc}
\label{app:cross_domain}

In this section, we explain how we build the data set used for the cross-domain experiment to increase the divergence with the Bios data set.
To do so, we remove a set of words from the MP data set with regard to the sensitive attribute: gender. The words included in the set are the following:
\textit{'he', 'him', 'his', 'himself', 'Mr.', 'Sir', 'Lord', 'King', 'Prince', 'man', 'boy', 'gentleman', 'father', 'son', 'husband', 'brother', 'uncle', 'nephew', 'king', 'prince', 'she', 'her', 'hers', 'herself', 'Mrs.', 'Ms.', 'Miss', 'Lady', 'Dame', 'Queen', 'Princess', 'woman', 'girl', 'lady', 'mother', 'daughter', 'wife', 'sister', 'aunt', 'niece', 'queen', 'princess'}. 

\newpage

\bibliography{sample}

\end{document}